\DeclareMathOperator*{\argmin}{arg\,min}
\newtheoremstyle{italic}
  {15pt} 
  {9pt} 
  {\itshape} 
  {} 
  {\bfseries} 
  {.} 
  {.5em} 
  {} 
\theoremstyle{italic}
\newtheorem{definition}{Definition}
\newtheorem{proposition}{Proposition}
\renewcommand{\cite}[1]{%
  (\citealp{#1} [\citenum{#1}])%
}
\title{ Era Splitting: Invariant Learning for Decision Trees }
\author{Timothy DeLise}
\author{Timothy DeLise\thanks{Département de mathématiques et de statistique, Université de Montréal, Montreal, QC, Canada. timothy.delise@umontreal.ca \newline Acknowledgment: This research was funded, in part, by the Numerai during a 2023 research internship.}}
\begin{document}

\maketitle

\begin{abstract}

Real-life machine learning problems exhibit distributional shifts in the data from one time to another or from one place to another. This behavior is beyond the scope of the traditional empirical risk minimization paradigm, which assumes i.i.d. distribution of data over time and across locations. The emerging field of out-of-distribution (OOD) generalization addresses this reality with new theory and algorithms which incorporate \textit{environmental}, or \textbf{\textit{era-wise}} information into the algorithms. So far, most research has been focused on linear models and/or neural networks \cite{arjovsky2020invariant, parascandolo2020learning}. In this research we develop two new splitting criteria for decision trees, which allow us to apply ideas from OOD generalization research to decision tree models, namely, gradient boosting decision trees (GBDTs). The new splitting criteria use era-wise information associated with the data to grow tree-based models that are optimal across all disjoint eras in the data, instead of optimal over the entire data set pooled together, which is the default setting. In this paper, two new splitting criteria are defined and analyzed theoretically. Effectiveness is tested on four experiments, ranging from simple, synthetic to complex, real-world applications. In particular we cast the OOD domain-adaptation problem in the context of financial markets, where the new models out-perform state-of-the-art GBDT models on the Numerai data set. The new criteria are incorporated into the Scikit-Learn code base and made freely available online.  

\end{abstract}

\section{Introduction}

Gradient boosting decision trees (GBDTs) \cite{Friedman2001} and descendent algorithms such as those implemented via the Light Gradient Boosting Machine \cite{NIPS2017_6449f44a}, XGBoost \cite{Chen_2016}, and Scikit-learn's HistGradientBoostingRegressor \cite{scikit-learn}, among others, are considered state of the art for many real-world supervised learning problems characterized by medium data size, tabular format, and/or low signal-to-noise ratio \cite{grinsztajn2022treebased, Fieberg2023}. One such data science problem occurs with predicting future returns in the capital markets. When this problem is framed as a standard supervised learning problem, the target variable is assumed to be a continuous score, representing the expected return (alpha) for a particular stock. The input data is the available information about that stock at that particular time. Each stock's input data and associated target for each date are represented as one row in a data set. This problem is generally referred to as predicting \textit{the cross-sectional returns} of the equity market. 

Supervised learning models in their original form are not \textit{aware} of the date information presented in the data. One row of input data is associated with one target. Samples are assumed to be drawn I.I.D. from the training and test sets, per the empirical risk minimization (ERM) principal \cite{NIPS1991_ff4d5fbb}. While in reality $N$ samples are drawn simultaneously from the data set, where $N$ is the number of stocks in the tradable universe. The predictions for all $N$ stocks are created together at the same time. These $N$ data points may not be completely I.I.D. either, as there are measurable positive correlations among stocks in the stock market over any particular time period. Indeed these ideas are the foundation of the capital asset pricing model (CAPM)  \cite{fama_capm} and modern portfolio theory  \cite{Markowitz_1952}, as well as the principles of risk factor models such as the Barra risk model \cite{RePEc:spr:sprchp:978-3-030-91231-4_99}. Thus, the cross-sectional stock prediction problem violates many of the assumptions of the ERM principle implicitly assumed in the typical supervised learning setting. Meanwhile, practitioners will often take a GBDT model off the shelf and implement it under these assumptions.

The discrepancy between the foundations of the ERM principle and the realities of ML application fits into the contemporary literature in the academic field of out-of-distribution generalization (OOD) research \cite{liu2023outofdistribution, arjovsky2020invariant, parascandolo2020learning}. This field of study recognizes that the ERM principle does not hold true for many, if not all, real-life applications. It does not assume that the data distribution of our training and test sets are identical. Instead, the framework assumes a certain distributional shift among disjoint data sets comprising the training data. Models are designed to account for these shifts. In the literature, this concept is often described in terms of \textit{environments} or \textit{domains} \cite{arjovsky2020invariant, peters2015causal, wilds2021}. In the context of this research, an environment is a domain, is an \textit{era}. Different environments define different experimental conditions that can arise when drawing data at different times or from different locations. Only data drawn from the same environment follows the same distribution, but across environments the distribution can change. Data sets in OOD generalization research are composed of data from several different environments.

There is assumed to be some distributional shift in the data-generating process from one environment to another. The purpose of this is to recognize that there may be \textit{spurious} signals that work in a single or even a group of environments, but fail to generalize to out-of-sample (OOS) data. It is shown in \cite{parascandolo2020learning} that these spurious signals can exist when many environments are pooled together. But these spurious signals change or disappear from one environment to another. These spurious signals usually will present themselves as simpler interpretations of the problem, so a naive model will quickly latch onto them. Think of the cow on grass and camel on sand image recognition problem \cite{DBLP:journals/corr/abs-2010-15775}. During training an image classification task for predicting the label of the animal in an image, cows are always displayed on green grass backgrounds and camels always appear on sandy backgrounds. Naive ML models will simply learn to associate the color green with a cow and the color of sand with a camel. Of course, this is a severe over-simplification of the problem. During evaluation, a cow appearing on a sandy background is classified as a camel, and a camel on a grassy background is classified as a cow. A model which latches onto these spurious signals will perform well in-sample, but have severe flaws when evaluated OOS. Fortunately there may also exist \textit{invariant} signals which work in all environments. These signals can often be much more complex than the spurious signals, so naive models are not likely to learn these invariant signals as readily. In this example, the invariant signal is the cow or camel that is actually present in the image. If an ML model truly understands what a camel or cow looks like, then it will have no problem predicting correctly, no matter the background. 

Figure \ref{fig:ood-eras} shows a conceptual example of what eras represent in financial data. Each era spans a period of time. The boxes spanning each era contain descriptors for the macro-economic environment during each time period. The implication is that financial data coming from each time period (era) contains implicit biases due to the specific conditions of each time period. A question mark is used to show that it is impossible to describe the economic conditions in the future, since that time is still unknown. The traditional ML setting, assuming the ERM principle, implies that the data from the in sample period obey all the same laws as data from the OOS period. OOD research recognizes that this assumption is too idealized for real-life situations, where laws affecting outcomes change from one time to another, or from one place to another.

\begin{figure}
    \centering
    \includegraphics[width=0.8\textwidth]{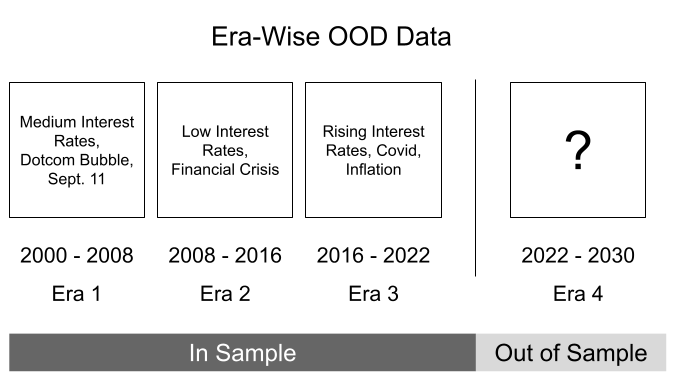}
    \caption[Shifting Financial Environment Over Time]{Financial data experience distribution shifts over time due to changing macro-economic variables and current events. OOD algorithms endeavor to uncover signals which are present in all eras (environments) instead of spurious signals only present in some.}
    \label{fig:ood-eras}
\end{figure}

The purpose of the OOD field of research is to design ML procedures which allow models to ignore the spurious and learn the invariant signals in data, by way of a more accurate representation of the problem. In the work of \cite{arjovsky2020invariant}, which is considered foundational in the field, the authors define their problem as a constrained optimization problem and convert it into a loss function that can be used in gradient descent-style training procedures, such as deep learning with neural networks \cite{Goodfellow-et-al-2016}. The loss function combines the traditional ERM risk term, which minimizes the average error over all environments, and an additional penalty on the magnitude of the gradients of the error in each environment. The gradient norm term is used to measure the optimality of the solution in each environment. Considerable theoretical and data analysis is performed around this idea, and the technique is implemented via neural networks and shows that it works on synthetic data problems designed for this purpose, such as the Colored MNIST problem.

A similarly influential work \cite{parascandolo2020learning} also analyzes the phenomenon of spurious and invariant signals existing simultaneously in data, and how to disentangle the signals with linear models and neural networks. The authors consider the loss surface (the loss function of model parameters), and describe the co-existence of local minima on the surface where some minima correspond to spurious predictors and others correspond to the invariant predictors. The insight here is to realize that spurious minima occur is some environments and not others, but invariant minima occur in all environments in the same location. The authors extend this reasoning and consider the gradients of the loss function with respect to the model parameters. The gradient is constructed as a vector with as many elements as model parameters. The technique is to compute a gradient vector from data in each environment separately. The directions of each gradient element should agree across all environments toward invariant minima but the directions will disagree for spurious signals. In order to realize this understanding practically, the authors alter the traditional gradient descent algorithm by way of the \textit{\textbf{AND-mask}} \cite{parascandolo2020learning}. The AND-mask is a vector the same size as the gradient, containing only ones and zeros (true or false). Elements of the AND-mask corresponding to gradient elements that agree (have the same sign) across all environments are set to 1, and the other elements that disagree (have different signs) are set to zero. The AND-mask is then multiplied element-wise by the gradient of the loss, canceling out elements with disagreement in direction across environments. The \textit{masked} gradient vector is then used in the downstream gradient descent step, propagating the model parameters toward invariant minima in the loss surface and not toward the spurious minima. The authors then show empirically how neural network models can effectively extract invariant predictors from data while ignoring spurious ones. It is important to note that the theoretical insights form the previous two references are limited to the linear settings, while the neural network models function in high dimensions and non-linear signals. 

This current research article investigates the OOD generalization problem in the context of gradient boosted decision trees (GBDTs), and related decision tree algorithms. In particular we focus on \textit{regression trees}, which take a continuous variable as target, as opposed to trees for classification which take a class label as target. This field of research is largely unexplored, with the only reference to the idea known at the time of writing in \cite{liao2024invariant}, which also looks like a promising direction. It is a related idea but distinct from this paper. The researchers define a criterion to identify \textit{invariant} splits from the set of potential splits at each node. The predicted values at each node should be consistent over every era of data, to be invariant. The rule is adapted to a regularization term which is combined with the original splitting criterion. The idea is similar, but less explicit, to directional era splitting, which is explained more in detail later. In this research, we do not employ penalties, but instead replace the original split criterion with our own new criteria.

The rest of the introduction offers a review of decision trees for regression and the GBDT algorithm, highlighting the original splitting criterion used in most off-the-shelf GBDT libraries for regression \cite{XGBoost, LightGBM}. The subsequent sections then describe the novel contributions of this research. The \textbf{\textit{era splitting}} and \textbf{\textit{directional era splitting}} criteria are presented in sections \ref{section:era-splitting} and \ref{section:directional-era-splitting}, which are new contributions of this research. A theoretical breakdown follows in section \ref{sect:theory} which offers key insights and motivations to the design choice of this research. Section \ref{section:methods} describes our experimental methods, specifying the OOD details of each of the 4 experiments. Finally, the paper is concluded with a discussion after presenting results. The results indicate that our new splitting criteria lead to \textbf{better out-of-sample performance} and a \textbf{smaller generalization gap}. In particular, directional era splitting stands out as the best. Links to the open source code base are given in the last section of the paper.

\subsection{Regression Trees}

The basis of the GBDT algorithm for regression are decision trees for regression, also known as \textit{regression trees}. This section follows closely from the chapter, \emph{Regression Trees}, of \cite{Breiman1984}. Regression trees are a practical tool that dates back to the 1960s \cite{Morgan1963ProblemsIT}. It has been akin to linear regression in spirit and underlying theory, but the implementation details are much different.

The problem setup is common in machine learning and statistics, more generally known as \textit{supervised learning}. The data consists of pairs of instances, $(x,y)\in (\mathrm{X}, \mathrm{Y})$, where $X$ is called the \textit{measurement space} and $Y$ is the \textit{target space}. Usually the measurement space is the real numbers of some integer dimension, $d > 0$, and the target space is usually just the real numbers, so $(x,y) \in (\mathbb{R}^d \times \mathbb{R})$. The $x$ are called the independent (or predictor) variables and the $y$ are the dependent (or response) variables. A prediction rule, or predictor, is a real-valued, parameterized function $f(x, \theta)$ on $\mathbb{R}^d$, where $\theta$ represents all the parameters of the predictor. The foundational assumption is made that 
\begin{equation}
    \mathbb{E}[Y|X=x] = f(x, \theta).
\end{equation} \textit{Regression analysis} is the term describing how to construct $f(x, \theta)$ from a training sample $\mathcal{L}$ consisting of $N$ data points $(x_1, y_1), ..., (x_N, y_N)$. Before getting into the specifics of regression trees as a predictor $f$, let's first define terms to help us compute the error of a predictor.

\begin{definition}[Mean Squared Error, \cite{Breiman1984}]
    \label{def:mse}
    The mean squared error $R^*(f)$ of the predictor $f$ is defined as
    \begin{equation}
        R^*(f) = \mathbb{E}[(\mathrm{Y} - f(\mathrm{X}))^2],
    \end{equation}
    where $(\mathrm{X}, \mathrm{Y})$ is an independent sample taken from $\mathcal{L}$.
\end{definition}

Given that the training samples are also assumed to be independently sampled from $\mathcal{L}$, the \textit{re-substitution estimate}, $R(f)$, is the usual method for estimating $R^*(f)$, given by
\begin{equation}
\label{eq:resubstitution-estimate}
R(f) = \frac{1}{N} \sum_{n = 1}^N (y_n - f(x_n))^2.
\end{equation}
In ML parlance this would be called the \textit{training} error or loss, since it is the error calculated with the same data that was used to train the model.

The task of regression is to find parameters of a the predictor function of independent variables that best explains a dependent variable in the least squared sense. If $\theta$ is a finite set of parameters, $\theta = (\theta_1, \theta_2, ... )$, then $\hat{\theta}$ is the parameter value that minimizes the re-substitution error, satisfying 
\begin{equation}
    R(f(x, \hat{\theta})) = \min_\theta R(f(x, \theta)).
\end{equation}
 In linear regression, $f$ takes the functional form $f(x,\theta) = \theta_0 + \theta_1 * x_1 + \theta_2 * x_2 ...$ where $\theta$ is a vector of real-valued numbers to be estimated. In tree regression, $\theta$ represents the tree structure, splitting rules, and parameters that go into defining the tree. 

 Tree predictors function by \textit{\textbf{splitting}} the training data into subsets according to some defined splitting rule. The measurement space $\mathrm{X}$ is partitioned by a series of binary splits so that every value of $x \in \mathrm{X}$ falls into a terminal node. Each terminal node is assigned a constant value that acts as the prediction for data instances that end up in that node. In Figure \ref{fig:tree-example}, an example tree is drawn, taken from \cite{Breiman1984}. Each node $t_i$ for $i \in \{1,2,...,9\}$ is either a parent node or a terminal node. Parent nodes split the data into two child nodes according to the particular split rule for that node. Terminal nodes assign constant values $y(t_i)$ to the nodes. Any data point falls into one of the terminal nodes of the tree structure, which is assigned that node's predicted value. 

\begin{figure}
    \centering
    \pgfkeys{/pgf/inner sep=.6em}
    \begin{forest}
    [$t_1$, circle, draw, name=n1
        [$t_2$, circle, draw, name=n2, s sep=6em
            [$t_4$, draw, name=n4 ]
            [$t_5$, draw, name=n5 ]
        ]
        [$t_3$, circle, draw, name=n3, s sep=6em
            [$t_6$, draw, name=n6 ]
            [$t_7$, circle, draw, name=n7, s sep=6em
                [$t_8$, draw, name=n8 ]
                [$t_9$, draw, name=n9 ]
            ]
        ]
    ]{
        \draw (n1) ++(0em,-2em)++(0em,0pt) node[anchor=north,align=center]{split 1};
        \draw (n2) ++(0em,-2em)++(0em,0pt) node[anchor=north,align=center]{split 2};
        \draw (n3) ++(0em,-2em)++(0em,0pt) node[anchor=north,align=center]{split 3};
        \draw (n4) ++(0em,-1em)++(0em,0pt) node[anchor=north,align=center]{$y(t_4)$};
        \draw (n5) ++(0em,-1em)++(0em,0pt) node[anchor=north,align=center]{$y(t_5)$};
        \draw (n6) ++(0em,-1em)++(0em,0pt) node[anchor=north,align=center]{$y(t_6)$};
        \draw (n7) ++(0em,-2em)++(0em,0pt) node[anchor=north,align=center]{split 4};
        \draw (n8) ++(0em,-1em)++(0em,0pt) node[anchor=north,align=center]{$y(t_8)$};
        \draw (n9) ++(0em,-1em)++(0em,0pt) node[anchor=north,align=center]{$y(t_9)$};
    }
    \end{forest}
    \caption[Example Decision Tree Structure, Figure 8.2 \cite{Breiman1984}]{Figure 8.2 \cite{Breiman1984}, an example tree.}
    \label{fig:tree-example}
\end{figure}

The three necessary elements for building a regression tree from training data $\mathcal{L}$ are:
\begin{enumerate}
    \item A method of selecting a split at each node (a splitting criterion)
    \item A method to determine if a node is terminal
    \item A rule to assign the predicted value $y(t_i)$ at terminal nodes
\end{enumerate}

An important point to understand is that nodes are said to \textit{contain} data points. Indeed, trees are grown according to the training data $\mathcal{L}$. The \textit{root node}, $t_1$ always \textit{contains} all the training data. Split 1 is the rule which partitions all the data into two disjoint subsets, sending the first subset to $t_2$ and the second subset to $t_3$. Each successive node is then split in a similar fashion, according to the data it contains, until a terminal node is reached. The rest of this section focuses on items 1 and 3 from the list above, defining the splitting criterion of each node and the predicted value of terminal nodes. The method for determining when a node is terminal is something a little less precise. In modern libraries, there are several parameters that control when to stop tree growth, such as strict limits on tree depth or total number of leaves. Growing a tree and then reducing the total number of leaves is also called \textit{pruning}. A complete treatment of best practices to limit tree growth is beyond the scope of this text. Interested readers are invited to read the reference \cite{Breiman1984} and review the hyper-parameters of XGBoost \cite{XGBoost} and LightGBM \cite{LightGBM}.

\begin{proposition}[Proposition 8.10, \cite{Breiman1984}]
    \label{def:prop_810}
    The value of $y(t)$ that minimizes $R(d)$ is the average of $y_n$ for all cases $(x_i, y_i)$ falling into $t$; that is, the minimizing $y(t)$ is 
    \begin{equation}
        \bar{y}(t) = \frac{1}{N(t)} \sum_{x_i \in t} y_i
    \end{equation}
    where the sum is over all $y_i$ such that $x_i \in t$ and $N(t)$ is the total number of cases in $t$.
\end{proposition}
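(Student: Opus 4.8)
The plan is to reduce the global minimization of the re-substitution estimate $R$ from \eqref{eq:resubstitution-estimate} to an independent, one-dimensional minimization at the single node $t$, and then to solve that scalar problem explicitly. The key structural observation is that, because every training case $x_i$ lands in exactly one terminal node, the sum defining $R$ decomposes as a sum over terminal nodes,
\begin{equation}
  R = \frac{1}{N}\sum_{t \text{ terminal}} \sum_{x_i \in t} \bigl(y_i - y(t)\bigr)^2,
\end{equation}
in which the value $y(t)$ attached to a given node appears only in that node's inner sum. Consequently each $y(t)$ may be chosen to minimize its own term separately, and it suffices to minimize $g(c) = \sum_{x_i \in t}(y_i - c)^2$ over the scalar $c = y(t)$, the constant factor $1/N > 0$ being irrelevant to the location of the minimizer.

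For the scalar problem I would use the add-and-subtract-the-mean identity rather than calculus, since it delivers the minimizer and its optimality in one line. Writing $\bar y(t)$ for the claimed average and inserting $0 = \bar y(t) - \bar y(t)$ inside each square,
\begin{equation}
  g(c) = \sum_{x_i \in t}\bigl(y_i - \bar y(t)\bigr)^2 + 2\bigl(\bar y(t) - c\bigr)\sum_{x_i \in t}\bigl(y_i - \bar y(t)\bigr) + N(t)\bigl(\bar y(t) - c\bigr)^2.
\end{equation}
The middle sum vanishes by the definition of $\bar y(t)$, namely $\sum_{x_i \in t}(y_i - \bar y(t)) = \sum_{x_i \in t} y_i - N(t)\bar y(t) = 0$. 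What remains is a constant (the first sum) plus the nonnegative term $N(t)(\bar y(t) - c)^2$, which is zero exactly when $c = \bar y(t)$. Hence $g$ is minimized uniquely at $c = \bar y(t)$, which establishes the proposition. Equivalently one could set $g'(c) = -2\sum_{x_i \in t}(y_i - c) = 0$ and verify $g''(c) = 2N(t) > 0$.

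Since the steps are elementary, there is little genuine difficulty to overcome; the one point that deserves care is the decomposition step. What makes the node-wise minimization legitimate is precisely that a terminal-node value affects only the cases routed to that node, so the terms are variable-disjoint and can be optimized independently, with the tree structure, the splits, and all other node values held fixed. I would state this separability explicitly so that the phrase ``minimizes $R$'' at a single node is unambiguous, and I would record the implicit nondegeneracy assumption $N(t) \ge 1$, which holds for any node actually containing data and guarantees that $\bar y(t)$ is well defined.
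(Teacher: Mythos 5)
Your proof is correct. The paper itself gives no proof of this proposition --- it is quoted from Breiman et al.\ and used as a black box --- so there is no in-paper argument to compare against; your two steps (the decomposition of $R$ over terminal nodes, which is legitimate precisely because each case falls in exactly one terminal node so the $y(t)$ appear in variable-disjoint terms, followed by the bias--variance identity for the scalar problem) are exactly the standard argument from the cited reference, and your remarks on separability and on $N(t)\ge 1$ are the right points to make explicit.
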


Now we take the predicted value in any node $t$ to be $\bar{y}(t)$ and set
\begin{equation}
    R(T) = \frac{1}{N} \sum_{t \in \tilde{T}} \sum_{x_i \in t} (y_i - \bar{y}(t))^2,
\end{equation}
where $\tilde{T}$ is the set of all terminal nodes, and $ \sum_{x_i \in t} (y_i - \bar{y}(t))^2$ is the \textit{within node sum of squares}. Notice the relation between $R(T)$ and $R(f)$. Given a set $\mathcal{S}$ of potential splits in any node $t \in \tilde{T}$,

\begin{definition}[Definition 8.13, \cite{Breiman1984}]
    \label{def:best_split}
    The best split $s^*$ of $t$ is that split in $\mathcal{S}$ which most decreases $R(T)$.
\end{definition}

The format of this definition that is more common is to consider that any $s \in \mathcal{S}$ of $t$ splits the data into $t_L$ (left child node) and $t_R$ (right child node), and let
\begin{equation}
    \label{eq:split-criterion-1}
    \Delta R(s, t) = R(t) - R(t_L) - R(t_R),
\end{equation}
then the best split $s^*$ is the one which satisfies
\begin{equation}
    \Delta R(s^*, t) = \max_{s \in \mathcal{S}} \Delta R(s, t).
\end{equation}

The mean-squared error is analogous to the \textit{impurity measure} for classification trees. The best split is the one which results in the greatest reduction in impurity of the child nodes. In practice, all split rules are defined by one of the independent variables, $x_i$, and some real value $v$, such that all data points with $x_i <= v$ are sent to $t_L$ and the other data points go to $t_R$. Define such a split of node $t$ as $s(x_i, v)$. Thus, the set $\mathcal{S}$ contains all possible splits $s(x_i, v)$ $\forall$ $i \in N(t)$ and $\forall v \in \mathbb{R}$ and each node. Since data sets are finite, the search for the best $v \in \mathbb{R}$ comes down to searching over the number of distinct values of $x_i$ in $\mathcal{L}$.

This subsection described regression trees and how they fit into the field of regression analysis as a type of least squares approximation in function space. We've also reviewed how trees are grown, how tree nodes are split, and how predicted values are derived. In the next subsection we will put these regression trees into context with a review GBDTs for regression.

\subsection{GBDTs for Regression}

The previous section described how to grow a tree on a training data set, by splitting nodes at each step, successively reducing the error of the tree model predictor. GBDTs grow whole trees at each step, resulting in predictors which are a linear combination of many trees. Moreover, each tree is not regressed on the dependent variables $y_i$ themselves, but on the \textit{gradients} of the dependent variables with respect to each data point. This section provides a minimum foundation of GBDTs, following closely from \cite{Friedman2001}, concluding with the definition of the \textit{original split criterion} \cite{XGBoost} that is the standard way to score splits in GBDTs for regression.

This section continues with the notation developed in the previous subsection on regression trees. The general machine learning problem, also called the "predictive learning" problem, is described in \cite{Friedman_2001} as optimal function estimation. The goal is to find an estimate $\hat{F}(x)$ for the function $F^*(x)$, which maps from $\mathrm{X}$ to $\mathrm{Y}$, that minimizes the expected value of some loss function over the joint distribution of all $(x,y) \in (\mathrm{X}, \mathbb{R})$ 
\begin{equation}
    F^* = \min_F \mathbb{E}_{x,y} R^*(F) = \min_F \mathbb{E}_x \left[\mathbb{E}_y \left[ R^*(F) \right]|x \right].
\end{equation}
with $R^*(F)$ the mean-squared error loss function, as defined in definition \ref{def:mse}. GBDTs belong to a class of functions called \textit{additive expansions}, which are a parameterized class of functions $F(x, \theta)$, with $\theta$ a generic list of parameters, having the form 
\begin{equation}
    \label{eq:additive-expaansion}
    F(x,\{\beta_m, a_m\}_1^M) = \sum_{m=1}^M \beta_m h(x, a_m),
\end{equation}
with $\{\beta_m, a_m\}_1^M$ the set of parameters and $h(x, a_m)$ a generic function. In practice $h$ will be a regression tree. This parametric formulation changes the function optimization problem to that of parameter optimization. It is described in \cite{Friedman2001} how the optimal value of the parameters is often attained by starting with an initial guess, and taking successive \textit{steps} or \textit{boosts} based on the sequence of the previous steps toward the optimal value. 

Steepest-descent, also known as gradient descent, is one such method. The gradient for element $j$ of the gradient vector at the $m^{\text{th}}$ step is computed as 
\begin{equation}
    \{g_{j,m}\} = \left\{ \left[ \frac{\partial \mathbb{E}_{x,y} R(F(x,\theta)) }{\partial \theta_j}\right]_{\theta = \theta_{m-1}} \right\},
\end{equation}
where $\theta_{m-1}$ are the parameters resulting from the previous step. That parameter is computed as
\begin{equation}
\theta_{m-1} = \sum_{i=0}^{m-1} p_i,
\end{equation}
where $p_i$ are the results of the boosts at each step. These are defined by
\begin{equation}
    p_i = -\rho g_m
\end{equation}
where $g_m$ is the vector containing all the $\{g_{j,m}\}$, and $\rho$ is known as the \textit{learning rate}. This is one technique for numerical optimization in parameter space. An interesting pivot is to consider optimization of a non-parameterized function $F(\mathrm{x})$ in the same way, where the data, $\mathrm{x}$, is taken to be the "parameter". The goal is to minimize
\begin{equation}
    \min_F \mathbb{E}_y \left[ L(y,F(\mathrm{x}))|\mathrm{x} \right]
\end{equation}
where $L$ is some loss function. Following the numerical optimization with additive expansion paradigm, take the solution to be of the form
\begin{equation}
    F^*(\mathrm{x}) = \sum_{m=0}^M f_m(\mathrm{x}),
\end{equation}
with $f_0(\mathrm{x})$ the initial guess and $f_i(\mathrm{x})$ the subsequent boosting rounds for $i \in \{1, 2, ..., M\}$. For steepest decent we then have
\begin{equation}
    f_m(\mathrm{x}) = -\rho g_m(\mathrm{x}),
\end{equation}
with
\begin{equation}
    g_m(\mathrm{x}) = \left\{ \left[ \frac{\partial \mathbb{E}_y \left[ L(y,F(\mathrm{x})) |\mathrm{x} \right] }{\partial \mathrm{x}}\right]_{F(\mathrm{x}) = F(\mathrm{x})_{m-1}} \right\},
\end{equation}
where
\begin{equation}
    F_{m-1} = \sum_{i=0}^{m-1} f_i(\mathrm{x}).
\end{equation}
This ideal approach is non-parametric and assumes continuous data. In practice data is finite. With sufficient regularity integration and differentiation can be exchanged, resulting the gradient estimate
\begin{equation}
    g_m(\mathrm{x}) = \mathbb{E}_y \left[ \frac{\partial  \left[ L(y,F(\mathrm{x})) |\mathrm{x} \right] }{\partial \mathrm{x}}\right]_{F(\mathrm{x}) = F(\mathrm{x})_{m-1}}.
\end{equation}
And so 
\begin{equation}
    \label{eq:grad-descent}
    F_m(\mathrm{x}) = F_{m-1} - \rho g_{m-1}(\mathrm{x}).
\end{equation}

The final step here is a switch back to an assumption of a parameterized additive expansion, as in equation (\ref{eq:additive-expaansion}). This enforces a certain smoothness over the data, allowing to interpolate values for $x$ lying in between the data points $\{x_i\}_1^N$. The function $h$ from equation (\ref{eq:additive-expaansion}) is known as the \textit{base learner} or \textit{weak learner}. In regression, the base learner is a regression tree, which can define an output in $\mathbb{R}$ for any input in the input space $\mathrm{X}$. The optimization problem becomes a parameter estimation problem again
\begin{equation}
    \{\beta_m, a_m\}_1^M = \min_{\hat{\beta}_m, \hat{a}_m} \sum_1^N L \left( y_i, \sum_{m=1}^M \hat{\beta}_m h(x_i, \hat{a}_m) \right).
\end{equation}
The reference \cite{Friedman_2001} then introduces a \textit{greedy-stagewise} approach to this minimization, for $m \in \{1,2,...,M\}$, with
\begin{equation}
    (\beta_m, a_m ) = \min_{\beta, a} \sum_1^N L \left( y_i, F_{m-1}(x_i) + \beta h(x_i, a) \right)
\end{equation}
where
\begin{equation}
    \label{eq:boosting}
    F_m(\mathrm{x}) = F_{m-1}(\mathrm{x}) + \beta h(\mathrm{x}, a).
\end{equation}
This is what is called \textit{boosting} in the ML literature. Notice the similarity between equations (\ref{eq:grad-descent}) and (\ref{eq:boosting}). The function $\beta h(\mathrm{x}, a)$ is analogous to the steepest descent step toward $F^*(\mathrm{x})$ under the additive expansion assumption of equation (\ref{eq:additive-expaansion}). However, the the gradients $g_m(\mathrm{x})$ can be computed at the data points $x_i$, while $h(\mathrm{x}, a)$ is a parameterized function that we wish to give a best estimate of $g_m(\mathrm{x})$ for all $\mathrm{x}$. In order to do that, the optimal parameters $a_m$ for the regression tree $h$ are obtained by
\begin{equation}
    a_m = \argmin_{a, \beta} \sum_{i=0}^N [ -g_m(x_i) - \beta h(x_i, a) ]^2.
\end{equation}
The \textit{gradient boosting} model is complete, by defining the update step as follows, \begin{equation}
    F_m(\mathrm{x}) = F_{m-1}(\mathrm{x}) + \rho_m h(\mathrm{x}, a_{m}).
\end{equation}

In the preceding derivation from \cite{Friedman2001} leaves out a step from the reference, which optimizes the learning rate(s) $\rho$ for each update. This was left out because in modern ML libraries, the learning rate is usually chosen as a hyper-parameter of the model and kept constant throughout training. Most likely, this preference over time has had to do with the classic \textit{bias-variance trade off} \cite{hastie01statisticallearning}, where fitting training data perfectly can be detrimental to model generalization. Usually, optimal parameters are tuned using a hold-out or \textit{test} set of data or by cross-validation \cite{Breiman1984}.

Finally, the first algorithm presented is the gradient boosting algorithm, without tuning the learning rate, in its general form. $L$ is any differentiable loss function, and $\rho$ is the learning rate.

\begin{algorithm}
\caption{Algorithm 1: Gradient Boost \cite{Friedman_2001}}
\label{alg:gradient-boost}
$F_0(\mathrm{x}) = \argmin_\rho \sum_{i=1}^N L(y_i, \rho)$

\For{$m=1$ to $M$}{
    \For{$i=1$ to $N$}{
        $g_i =  \left[ \frac{\partial  \left[ L(y_i,F(x_i)) |x_i \right] }{\partial x_i}\right]_{F(\mathrm{x}) = F(\mathrm{x})_{m-1}}$
    }
    $a_m = \argmin_{a, \beta} \sum_{i=1}^N [g_i - \beta h(x_i, a)]^2$

    $F_m(\mathrm{x}) = F_{m-1} + \rho h( \mathrm{x}, a_m )$

}
\end{algorithm}

In the case of least squares regression, when the loss function is $L(y, F(\mathrm{x})) = (y - F(\mathrm{x}))^2 / 2$, then the gradients at each step are just $g_i = y_i - F_{m-1}(\mathrm{x})$. This finishes our derivation of the gradient boosting algorithm for regression, in the following algorithm.

\begin{algorithm}
\caption{Algorithm 2: LS Boost \cite{Friedman_2001}}
\label{alg:ls-boost}
$F_0(\mathrm{x}) = \argmin_\rho \sum_{i=1}^N L(y_i, \rho)$

\For{$m=1$ to $M$}{
    \For{$i=1$ to $N$}{
        $g_i =  y_i - F_{m-1}(\mathrm{x})$
    }
    $a_m = \argmin_{a, \beta} \sum_{i=1}^N [g_i - \beta h(x_i, a)]^2$
    
    $F_m(\mathrm{x}) = F_{m-1} + \rho h( \mathrm{x}, a_m )$

}
\end{algorithm}

When the base learner $h$ is a regression tree, then algorithm \ref{alg:ls-boost} defines the GBDT for regression algorithm. In this case, notice that line 4 of the algorithm is a least-squares fit of a regression tree via the re-substitution estimate, $R(f)$, of equation (\ref{eq:resubstitution-estimate}).

\subsection{Splitting}

Optimizing a regression tree $f$ is equivalent to finding the optimal split points of the input data $x_i$ at each node until a terminal node is reached. Each split point is defined by a single feature and a single value of that feature's data. All the data with that feature less than or equal to the split value gets partitioned into the left child node, and the data points with the feature greater than the split value go to the right child node. In successive steps the child nodes become parent nodes, and the data in those nodes is split in the same manner as the root node. Nodes are split like this until terminal nodes are reached. The function that determines the optimal split point at each node is called the \textit{splitting criterion}, which measures the reduction in \textit{impurity}. The split criterion was introduced in equation (\ref{eq:split-criterion-1}). The minimum value of $\Delta R(s,t)$ over all splits $s$ in the set of all splits $\mathcal{S}$ is the split which most reduces most the average error of the tree prediction, $R(T)$, as per definition \ref{def:best_split} above. Minimizing $\Delta R(s,t)$ is equivalent to maximizing \textit{the original split criterion}, given in the next definition.

\begin{definition}[Original Split Criterion, \cite{XGBoost}]
\label{def:original_split_criterion}
Define the original split criterion as 
    \begin{equation}
    \label{original_split_criterion}
    \mathcal{L}_{\text{split}} = \frac{1}{2} \left[ \frac{ \left(\sum_{i \in I_L} g_i\right)^2}{\sum_{i \in I_L} h_i + \lambda } + \frac{ \left(\sum_{i \in I_R} g_i\right)^2}{\sum_{i \in I_R} h_i + \lambda } - \frac{ \left(\sum_{i \in I} g_i\right)^2}{\sum_{i \in I} h_i + \lambda } \right],
    \end{equation}
where $I$, $I_L$, and $I_R$ are data set identifiers corresponding to the data of the parent node, the left child node, and the right child node respectively. The $g_i$ are the gradients as defined in algorithm \ref{alg:ls-boost}, above, $h_i$ are the \textit{hessians} (which are constant for regression), and $\lambda$ is the L2-regularization term.
    
\end{definition}

With the original split criterion defined, we have outlined the state-of-the-art in GBDTs, which we will take as baseline. The original contributions of this paper will be 2 new alternative splitting criteria inspired by recent trends in the OOD generalization research. For this reason we need to first define eras more concisely.

\subsection{Eras (Environments)}

This research employs existing conventions to define what are called \textit{eras}, also known as  \textit{environments} \cite{peters2015causal,arjovsky2017wasserstein} and sometimes \textit{domains} as in \cite{wilds2021}. This paragraph paraphrases from \cite{peters2015causal}, which is the standard setup in related works \cite{arjovsky2020invariant} and \cite{parascandolo2020learning}. Firstly, assume there are different experimental conditions belonging to a the set of all experimental conditions, $e \in \mathcal{E}$, and we take an i.i.d. sample $(\mathrm{X}^e, \mathrm{Y}^e) \in (\mathbb{R}^d, \mathbb{R})$ from each environment, where $\mathrm{X}^e$ and $\mathrm{Y}^e$ are the independent and dependent variables respectively. In particular, the different distributions of $X^e$ in the environments are unknown and not precisely controlled. 

\begin{definition}[Era (Environment) \cite{peters2015causal}]
    \label{def:era}
    An \textbf{era (environment)} refers to a particular \textit{experimental conditions} $e$ from the set of all possible conditions $\mathcal{E}$.
\end{definition}

For this research, it is assumed that every data point originates from some environment, and our entire data set is the union of the data from all the environments. Practically speaking, this comes down to assigning an environmental identifier $j$ to each data point. The identifier is an integer referring to the index of the environment the data comes from. In the following text it is usually assumed that we have data from $M$ eras, and so the training data $(X,Y)$ is the union of data from all $M$ environments  
\begin{equation}
    (X,Y) = \bigcup_{j=1}^M (X^j,Y^j).
\end{equation}

In the original setting, GBDTs pool all the training data together when computing the splitting criterion. If the training data inherently comes from separate environments (eras), as defined in the OOD literature \cite{peters2015causal, arjovsky2020invariant}, what is lost by pooling all the training data together to find the best split? Would it be better to employ a splitting criterion that treats data from each environment (era) separately? This research endeavors to show that the answer to this question is, "yes". 

\subsection{New Contributions and Outline}

A new splitting criterion called \textit{era splitting} is proposed in section \ref{section:era-splitting}. It combines the impurity reduction calculation from each era separately into one final new criterion, which is a smooth average of the impurity reduction of the era-wise splits. It captures the essence of \textit{invariance} from the IRM paradigm \cite{arjovsky2020invariant} applied to the original split criterion. Another new splitting criterion, called \textit{directional era splitting}, is presented in section \ref{section:directional-era-splitting}, which is designed after the AND-mask \cite{parascandolo2020learning}, and looks for splits which maximize the agreement in the \textit{direction} of the predictions implied by the split for each era (environment). In section \ref{sect:theory}, some theoretical concepts are discussed, highlighting the motivation for the new splitting criteria. Section \ref{section:methods} then describes the experimental design used to validate our new splitting criteria. The experiments include a one-dimensional toy model aimed at verifying the objectives in a simple setting, a high-dimensional synthetic dataset proposed for the same purpose in \cite{parascandolo2020learning}, and two real-world empirical studies. The first is the Camelyon17 dataset \cite{bandi2018detection} breast cancer detection domain generalization problem, the second is the financial stock market cross-sectional return problem provided by the Numerai data science tournament. In section \ref{results}, the results from the experiments are presented. Finally section \ref{section:discussion} provides interpretation of the results and discussion about future work.

\section{Era Splitting}
\label{section:era-splitting}

 In the context of the original splitting criterion in equation (\ref{original_split_criterion}), the data set definitions $I$, $I_L$, and $I_R$ need to be refined when dealing with environments. With era splitting, each training data point comes from one of $M$ distinct eras. Thus there exists data sets $I^j$, $I_L^j$, and $I_R^j$, referring to data coming from the $j^{th}$ era, such that $I^j \in I$, $I_L^j \in I_L$, and $I_R^j \in I_R$ for $j \in \{ 1, 2, ..., M \}$. Using this convention, a new era-wise splitting criterion, which computes the information gain of a split for the $j^{th}$ era of data is given by

\begin{equation}
\label{eq:era-loss}
    \mathcal{L}_{\text{split}}^j = \frac{1}{2} \left[ \frac{ \left(\sum_{i \in I_L^j} g_i\right)^2}{\sum_{i \in I_L^j} h_i + \lambda } + \frac{ \left(\sum_{i \in I_R^j} g_i\right)^2}{\sum_{i \in I_R^j} h_i + \lambda } - \frac{ \left(\sum_{i \in I^j} g_i\right)^2}{\sum_{i \in I^j} h_i + \lambda } \right],
\end{equation}
which is just the original split criterion applied to the data from one particular era. The era splitting criterion in its basic form is written as a mean average of the era-wise splitting criterion for each era of data.

\begin{equation}
\label{average_era_split_criterion}
\mathcal{L}_{\text{era split}} = \frac{1}{M} \sum_{j=1}^M \mathcal{L}_{\text{split}}^j
\end{equation}

\begin{figure}
    \centering
    \includegraphics[width=0.48\textwidth]{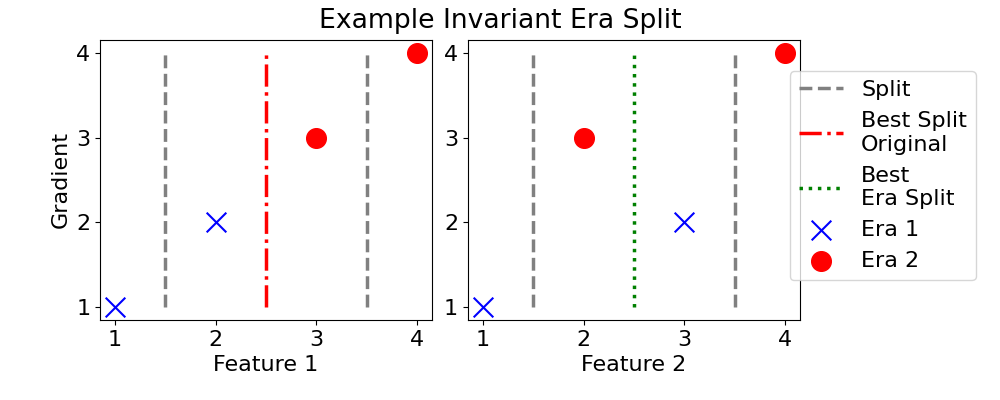}
    \includegraphics[width=0.48\textwidth]{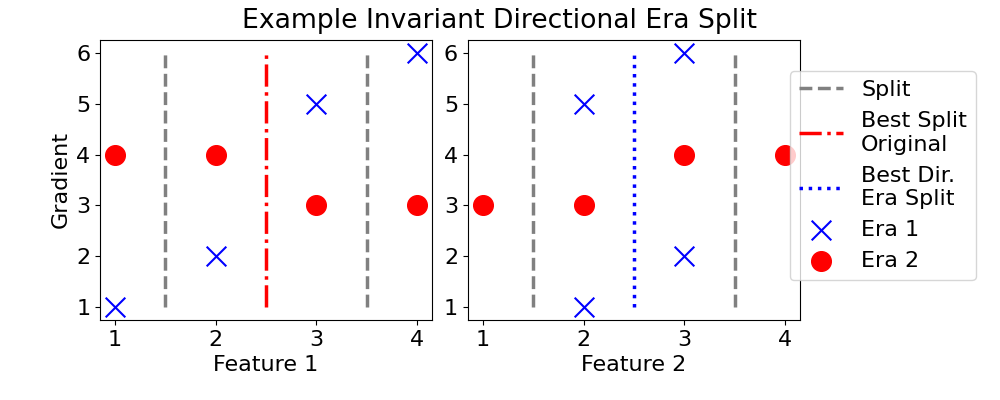}
    \caption[Example Splitting Problems with Eras (Environments)]{Degenerate split decisions induced by the traditional splitting criterion (Eq. \ref{original_split_criterion}) when viewed from an OOD setting. Each plot displays example data with 2 features from two eras (environments). The y-axis plots the gradients corresponding to each data point. \textbf{On the left}, the original splitting criterion chooses a split which doesn't improve impurity in any era. The era splitting criterion (Eq. \ref{era_split_criterion}) chooses a split that improves impurity in both eras. \textbf{On the right} the original criterion chooses a split which results in conflicting directions, while directional era splitting (Eq. \ref{eq:dir-era-split}) chooses a split resulting in consistent directions in each era.  }
    \label{fig:example-degen-era-split}
\end{figure}

This criterion defined in equation (\ref{average_era_split_criterion}) will have the highest score for the split which provides the greatest average decrease in impurity over all of the eras of data. Let us notice that this new criterion recovers equation \ref{original_split_criterion} in the case where there is only one era of data. Therefore the era splitting criterion can be viewed as a generalization of the original splitting criterion. This criterion will be less-likely to favor splits which work really well in some eras but not in others. Splits are not considered which are undefined in any era. Potential split points must have data to the left and right of them to be considered as optimal. The reason for this choice is illustrated in figure \ref{fig:example-degen-era-split}. The plot on the left shows a data configuration where the original splitting criterion will choose a split which doesn't induce an impurity decrease in any era of data by itself. On the other hand, the criterion in equation (\ref{average_era_split_criterion}) chooses a split reducing impurity in both eras simultaneously.

A further generalization is made by way of a smooth max function, called the Boltzmann operator \cite{pmlr-v70-asadi17a}. The Boltzmann operator operates on an array of real numbers, and has one parameter, called $\alpha$. When $\alpha = 0$, the Boltzmann operator recovers the arithmetic mean, when $\alpha = -\infty$ it recovers the $\min$, and when $\alpha = \infty$ it recovers the $\max$. It is known as a \textit{smooth max (min)} function.

\begin{definition}[The Boltzmann Operator]
Let $x_i$ for $i \in \{1, ..., n\}$ be $n$ distinct real numbers, and $alpha \in [ -\infty, \infty ]$, then define the Boltzmann Operator $\mathcal{B}_\alpha$ as 
    \label{def:boltzmann}
    \begin{equation}
        \label{boltzmann_operator}
        \mathcal{B}_\alpha (x_1, ..., x_n) = \frac{\sum_{i=1}^n x_i \exp{ ( \alpha x_i )} }{\sum_{i=1}^n \exp{ ( \alpha x_i )}  }.
    \end{equation}
\end{definition}

\begin{proposition}[Limit of the Boltzmann Operator]
    \label{prop:boltz_limit}
    \begin{equation}
        \lim_{\alpha \to \infty} \mathcal{B}_\alpha(x) = \max (x)
    \end{equation}
\end{proposition}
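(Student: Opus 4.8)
The plan is to exploit the fact that as $\alpha$ grows, the exponential weights $\exp(\alpha x_i)$ concentrate overwhelmingly on the largest of the $x_i$. First I would set $x_{\max} = \max_i x_i$ and, since the $x_i$ are assumed distinct, note that this maximum is attained by a unique index, say $i^*$. The key algebraic manipulation is to factor $\exp(\alpha x_{\max})$ out of both the numerator and denominator of $\mathcal{B}_\alpha$, which is legitimate because the exponential is strictly positive and never vanishes. This yields
\[
\mathcal{B}_\alpha(x) = \frac{\sum_{i=1}^n x_i \exp\!\big(\alpha(x_i - x_{\max})\big)}{\sum_{i=1}^n \exp\!\big(\alpha(x_i - x_{\max})\big)},
\]
in which every exponent $x_i - x_{\max}$ is nonpositive, and strictly negative for $i \neq i^*$.

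Next I would pass to the limit term by term. For each $i \neq i^*$ we have $x_i - x_{\max} < 0$, so $\exp(\alpha(x_i - x_{\max})) \to 0$ as $\alpha \to \infty$, while the single term with $i = i^*$ contributes $\exp(0) = 1$. Hence the denominator tends to $1$ and the numerator tends to $x_{i^*} \cdot 1 = x_{\max}$. Since both sums are finite, the limit of the quotient equals the quotient of the limits, giving $\mathcal{B}_\alpha(x) \to x_{\max}/1 = \max(x)$, as claimed.

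The only real care needed---and the step I would treat as the crux---is justifying that the finitely many vanishing exponentials genuinely suppress the nonmaximal terms in both sums and that no indeterminate form arises. Because there is a guaranteed term equal to $1$ in the denominator for every $\alpha$, the denominator is bounded below by $1$ and never approaches zero, so the quotient is well defined throughout and the interchange of limit and finite sum is routine. If one wished to drop the distinctness hypothesis, the same factoring argument goes through with the maximizing set $S = \{\, i : x_i = x_{\max} \,\}$ of size $k \ge 1$, where the numerator and denominator tend to $k\,x_{\max}$ and $k$ respectively, again yielding $\max(x)$.
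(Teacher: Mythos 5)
Your proof is correct and follows essentially the same route as the paper's: factor $e^{\alpha x_{\max}}$ out of numerator and denominator, observe that every remaining exponent is strictly negative for non-maximal indices so those terms vanish as $\alpha \to \infty$, and conclude the quotient tends to $x_{\max}/1$. Your added remarks on the denominator being bounded below by $1$ and on handling ties via the maximizing set are sound refinements, but the core argument is identical.
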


\begin{proof}
    Let $$\hat{x} = \max (x)$$ and factor the $\hat{x}$ term from both the numerator and denominator of $\mathcal{B}_\alpha(x)$
    \begin{equation}
        \mathcal{B}_\alpha(x) = \frac{ \hat{x} e^{\alpha \hat{x}} + \sum_{x_i \neq \hat{x}} x_i e^{\alpha x_i }}{ e^{\alpha \hat{x}} + \sum_{x_i \neq \hat{x}} e^{\alpha x_i }}.
    \end{equation}
    Now, divide numerator and denominator by $e^{\alpha \hat{x}}$, 
    \begin{equation}
        \mathcal{B}_\alpha(x) = \frac{ \hat{x}  + \sum_{x_i \neq \hat{x}} x_i e^{-\alpha(\hat{x} -  x_i) }}{ 1 + \sum_{x_i \neq \hat{x}} e^{-\alpha(\hat{x} -  x_i)}}.
    \end{equation}
    The term $\hat{x} -  x_i > 0$ for all $ x_i : x_i \neq \hat{x}$, which means that 
    \begin{equation}
        \lim_{\alpha \to \infty} e^{-\alpha(\hat{x} -  x_i)} = 0, \forall x_i : x_i \neq \hat{x}.
    \end{equation}
    Thus, the limit of $\mathcal{B}_\alpha(x)$ is found by taking the limit of numerator and denominator, noticing that the sums of exponential functions all go to zero, yielding
    \begin{equation}
        \lim_{\alpha \to \infty} \mathcal{B}_\alpha(x) = \frac{ \hat{x}  }{ 1 } = \hat{x}.
    \end{equation}
\end{proof}

Proposition \ref{prop:boltz_limit} for the limit and its proof can be given similarly to prove that $\lim_{\alpha \to -\infty} \mathcal{B}_\alpha(x) = \min(x)$. The final version of the era splitting criterion is defined next, which is understood as the smooth maximum (minimum) of the era-wise impurity reduction over all of the eras, and is a generalization of the original splitting criterion in Eq \ref{original_split_criterion}. 

\begin{definition}[Era Split Criterion]
    The \textit{era split criterion} $\mathcal{L}_{\text{era split}}^\alpha$ is defined by 
    \begin{equation}
    \label{era_split_criterion}
    \mathcal{L}_{\text{era split}}^\alpha = \mathcal{B}_\alpha \left( \mathcal{L}_{\text{split}}^1, ..., \mathcal{L}_{\text{split}}^M \right).
    \end{equation}
\end{definition}

The default value of $\alpha$ is set to zero, which recovers equation \ref{average_era_split_criterion}. Conceptually, varying $\alpha$ toward $-\infty$ will lead the criterion to prefer splits which increase the minimum information gain over all of the eras. This setting prefers splits that work in all eras. Conversely, varying $\alpha$ toward positive $\infty$ will cause the criterion to favor splits that improve the best single-era performance. In the prior case splits are preferred that have a more uniform affect on impurity decrease over all the eras. In the later case splits are preferred that improve our best performance in any era. In the context of learning an invariant predictor, negative values for $\alpha$ are preferred.

\section{Directional Era Splitting}
\label{section:directional-era-splitting}

\begin{figure}
    \centering
    \includegraphics[width=0.8\linewidth]{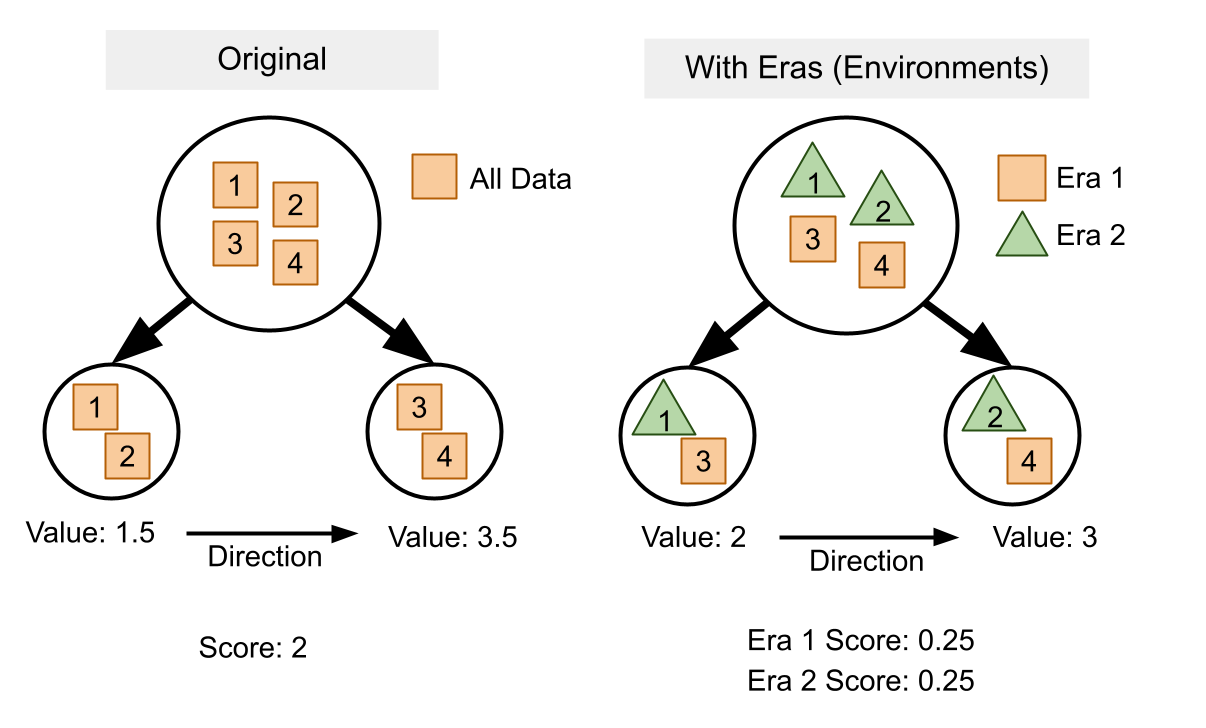}
    \caption[Illustrating Splitting over Eras and Directions of Splits]{ A schematic example of splitting data at each tree node. The target values are stored inside each data point. The original setting pools all the data together. Era splitting computes split scores on a per era basis. The value of the child nodes, the \textit{directions} of the splits and the original and era split criteria scores are indicated. Notice era splitting does not choose the same split as the original, since it wouldn't improve impurity in any era.  }
    \label{fig:splitting-example}
\end{figure}

There is a subtle ambiguity that arises with equation \ref{era_split_criterion}, which requires the introduction of the concept of the \textit{direction} implied by a split. A split implies a direction via the predicted values of the child nodes. Remember from proposition \ref{def:prop_810} that the predicted value at any node is the mean of all the dependent variables $y$ contained in the node. See figure \ref{fig:splitting-example}. If the value of the left child node is larger than the value of the right child node, then the direction of this split is too the left. If the value of the right child node is larger, then the direction is to the right. Let the value of the left child node be $v_l$, and the value of the right child node be $v_r$, then without loss of generality, the direction of the split, $d_\text{split}$, is defined as the sign of the difference between the left child node value and the right child node value

\begin{equation}
\label{split_direction}
d_\text{split} = \text{sign}(v_l - v_r).
\end{equation}

 In the original setting where all the training data is pooled together, there can be only one direction implied over the entire data set. The traditional splitting criterion (eq. \ref{original_split_criterion}) does not consider the direction of the split, since it is superfluous. It doesn't matter whether the direction implied by the split is to the left or to the right, since there is only one era, just as long as the impurity is reduced.
In era splitting, the split criterion is computed for each era separately, and then summarized with the Boltzmann operator. A problem that arises here is that there could be splits that imply conflicting directions from one era to the next. This is akin to single variable linear regression models having coefficients with opposite signs. We can describe the direction of a split in era $j$ by adapting equation \ref{split_direction} with era-wise descriptors, as follows.

\begin{equation}
d_\text{era split}^j = \text{sign}(v_l^j - v_r^j)
\end{equation}

The directions implied by the split for each era of data results in an array of length $M$, $\{ d_\text{era split}^1, d_\text{era split}^2, ..., d_\text{era split}^M  \}$. 
\begin{definition}
    \label{def:dir-era-split}
        Directional era splitting, $\mathcal{D}_\text{era split}$, is the average agreement in the direction of a split over all $M$ eras.
    
    \begin{equation}
    \label{eq:dir-era-split}
    \mathcal{D}_\text{era split} = \frac{1}{M} \left| \sum_{j = 1}^M d_\text{era split}^j \right|
    \end{equation}
\end{definition}

Each $d_\text{era split}^j$ will take a value of either $1$ or $-1$, depending on the direction of the split for that era. The sum is then bounded on the interval $[-M,M]$. By taking the absolute value and dividing my $M$, $\mathcal{D}_\text{era split}$ is then bounded on the interval $[0,1]$. At the low end, with a value of $\mathcal{D}_\text{era split} = 0$, half of the directions go in one way while half go in the other way, the highest level of disagreement. At the high end, with a value of $1$, all the directions go the same way, the most agreement. When choosing the best split during tree growth, higher values of $\mathcal{D}_\text{era split}$ are preferred.

\section{Theoretical Breakdown}
\label{sect:theory}

This section expounds three propositions which are illustrative of the motivations behind era splitting. Here the focus is on era splitting (eq. \ref{era_split_criterion}), and not directional era splitting (eq. \ref{eq:dir-era-split}). The former being more anchored in traditional theory, and connected with the original criterion. The later is new and experimental, with impurity tied to the direction of the predictions in each era. Nevertheless, the following proposition \ref{prop:reg} applies to both. This section contains all new, original content.

One problem with traditional splitting methods in the OOD setting is that the "optimal" split can be \textit{degenerate} in any or all of the eras (environments). The term, degenerate, indicates that the split doesn't result in improved impurity in the child nodes compared to the parent for any particular era of the data set. 

\begin{definition}[Degenerate Splits]
    A split is considered degenerate if $\mathcal{L}_{\text{split}}^j <= 0$, or is undefined, for any era $j \in M$ for $M$ eras of training data. Where $\mathcal{L}_{\text{split}}^j$ is defined in equation (\ref{eq:era-loss}). The expression is undefined if the split results in one completely empty child node.
\end{definition}

This is a split that never would have been if the tree was grown on that era of data independently. The left side of figure \ref{fig:example-degen-era-split} shows a simple example of a degenerate split with a toy data set of 4 data points. The original split criterion favors the split (red dashed line) which effectively separates era 1 data from era 2 data. In each era, this split does not induce any reduction in impurity. The era splitting criterion, on the other hand, favors a split (green dotted line) which reduces impurity in both eras simultaneously. Traditional splitting methods can induce splits that don't split the data in some particular environment at all.

\begin{proposition}
    \label{prop:prop1}
    The original splitting criterion defined in equation \ref{original_split_criterion} can result in degenerate splits.
\end{proposition}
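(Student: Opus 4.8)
The statement is existential (``can result in''), so the plan is to exhibit a single dataset, partitioned into eras, on which the split that maximizes the original criterion (Eq.~\ref{original_split_criterion}) turns out to be degenerate in the sense of the preceding definition. This is exactly the scenario sketched in the left panel of Figure~\ref{fig:example-degen-era-split}, so the task reduces to making that picture quantitative. I would first specialize the criterion to the regression setting used throughout the paper: set the hessians $h_i = 1$ and the regularizer $\lambda = 0$, so that $\mathcal{L}_{\text{split}}$ collapses to a variance-reduction expression in which each node contributes a term of the form $\bigl(\sum_i g_i\bigr)^2 / |I|$. This keeps the arithmetic transparent and costs no generality, since only one witnessing configuration is required.

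Next I would construct a four-point, two-era dataset with a single feature $x$ whose values happen to separate the eras: era~$1$ consisting of the points $(x,g)=(1,1)$ and $(2,1)$, and era~$2$ consisting of $(x,g)=(3,-1)$ and $(4,-1)$. Because the total gradient over the pooled (parent) node is $0$, the parent term of the criterion vanishes, and it remains only to compare the three admissible thresholds. I would then evaluate $\mathcal{L}_{\text{split}}$ at each threshold and check that the ``between-era'' threshold (separating $\{1,2\}$ from $\{3,4\}$) yields value $2$, strictly dominating the two ``within-era'' thresholds, which each yield $2/3$. Hence the original criterion selects the between-era split.

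Finally I would observe that, for this maximizing split, every point of era~$1$ lands in the left child and every point of era~$2$ in the right child, so each era contributes one completely empty child node. By the definition of degenerate splits given just above, the era-wise quantity $\mathcal{L}_{\text{split}}^{j}$ (Eq.~\ref{eq:era-loss}) is therefore undefined for each $j$, which makes the selected split degenerate and establishes the proposition. The main thing to be careful about is not a deep obstacle but a bookkeeping one: I must confirm that the argmax of the pooled criterion is \emph{genuinely} the degenerate split by verifying that both alternative thresholds strictly lose, and that the chosen witness is a legitimate regression-tree split configuration (two nonempty pooled children). Once those checks are in place, the existence claim follows immediately.
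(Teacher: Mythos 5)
Your proposal is correct and takes essentially the same approach as the paper: both exhibit a four-point, two-era witness dataset in which the split maximizing the pooled criterion (Eq.~\ref{original_split_criterion}) separates the eras, leaving each era entirely in one child so that $\mathcal{L}_{\text{split}}^{j}$ is undefined and the split is degenerate. Your version differs only in the concrete numbers (and is slightly more careful in that you explicitly verify the competing thresholds lose, which the paper asserts without computation).
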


\begin{proof}

Consider a data set consisting of two eras of data, two features and two data points, as defined in table \ref{tab:degen-data}, and appearing on the left plot of figure \ref{fig:example-degen-era-split}.

\begin{table}
    \centering
    \begin{tabular}{r|l}
        Feature 1 & 1, 2, 3, 4 \\
        Feature 2 & 1, 3, 2, 4 \\
        Era Identifier & 0, 0, 1, 1 \\
        Target & -1, -2, -3, -4 \\
        Prediction & 0, 0, 0, 0 \\
        Gradient & 1, 2, 3, 4 \\
    \end{tabular}
    \caption{Data for proposition \ref{prop:prop1}.}
    \label{tab:degen-data}
\end{table}

The optimal split score for the original split criterion defined in equation \ref{original_split_criterion} is equal to 2 and splits feature 1 between values 2 and 3. When computed independently over each era, this split results in an undefined score in both era 1 and era 2, since it does not split the data in either era. This split is degenerate. 

\end{proof}

This is one of the main conceptual motivations for the current research: it is possible that the original method misses, ignores, or misinterprets key environmental information. Era splitting addresses this problem in a very direct way: evaluate each split based on each environment separately. There is one configuration of era splitting which ensures the split points will \textbf{always} reduce impurity in every environment (era) of data simultaneously, never producing degenerate splits. This result is presented in proposition \ref{prop:no-degen}.

\begin{proposition}
\label{prop:no-degen}
The era splitting criterion, as defined in equation (\ref{era_split_criterion}), with Boltzmann alpha parameter approaching $-\infty$ will never induce a degenerate split.
\end{proposition}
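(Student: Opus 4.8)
The plan is to reduce the statement to the behavior of the Boltzmann operator in the limit $\alpha \to -\infty$, where it collapses onto the minimum over eras. First I would invoke the min-analog of Proposition \ref{prop:boltz_limit}, namely $\lim_{\alpha \to -\infty} \mathcal{B}_\alpha(x) = \min(x)$, which the text notes follows by the same factoring argument. Applied to equation (\ref{era_split_criterion}), this gives $\lim_{\alpha \to -\infty} \mathcal{L}_{\text{era split}}^\alpha = \min_{1 \leq j \leq M} \mathcal{L}_{\text{split}}^j$ for any fixed candidate split. Since tree growth always selects the split maximizing the criterion, the limiting objective is the max-min value $\max_{s \in \mathcal{S}} \min_j \mathcal{L}_{\text{split}}^j(s)$, and the task reduces to showing this maximizer is non-degenerate.

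Next I would recast the degenerate/non-degenerate distinction in terms of this minimum. The admissible set $\mathcal{S}$ already excludes splits undefined in some era (potential split points must have data on both sides in every era), so only the sign of the era-wise scores is at issue. By the definition of degeneracy, a split is degenerate exactly when $\mathcal{L}_{\text{split}}^j \leq 0$ for some era, i.e. $\min_j \mathcal{L}_{\text{split}}^j \leq 0$, and non-degenerate exactly when $\min_j \mathcal{L}_{\text{split}}^j > 0$. (In the unregularized regression case, the Engel form of the Cauchy--Schwarz inequality applied to the three quotients in (\ref{eq:era-loss}) forces each $\mathcal{L}_{\text{split}}^j \geq 0$, so degeneracy there means exactly $\min_j \mathcal{L}_{\text{split}}^j = 0$; this refinement is not needed below.) Thus the proposition is equivalent to the claim that the maximizer of the limiting max-min objective is non-degenerate whenever a non-degenerate split exists.

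The core step is a separation argument over the finitely many admissible splits. Assuming at least one non-degenerate split exists (otherwise the claim is vacuous), let $c^{*} > 0$ be the smallest era-minimum $\min_j \mathcal{L}_{\text{split}}^j$ taken over all non-degenerate splits. Because the Boltzmann operator is a convex combination of its arguments, $\mathcal{B}_\alpha \geq \min$ for every $\alpha$; hence every non-degenerate split has $\mathcal{L}_{\text{era split}}^\alpha \geq c^{*}$ for all $\alpha$. On the other hand, for each degenerate split the cited limit gives $\mathcal{L}_{\text{era split}}^\alpha \to \min_j \mathcal{L}_{\text{split}}^j \leq 0 < c^{*}$, so each such split eventually drops below $c^{*}$; by finiteness of $\mathcal{S}$ a single threshold $\alpha_0$ serves all of them simultaneously. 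For $\alpha < \alpha_0$ the maximizer of $\mathcal{L}_{\text{era split}}^\alpha$ therefore lies among the non-degenerate splits, which is exactly the claim.

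The main obstacle I anticipate is the uniformity of the limit across splits: pointwise convergence of each split's criterion to its era-minimum is immediate, but concluding that the selected (maximizing) split is eventually non-degenerate requires a separation bound holding simultaneously for every candidate. The cleanest route is the one above, exploiting $\mathcal{B}_\alpha \geq \min$ together with finiteness of $\mathcal{S}$, which sidesteps any delicate interchange of limit and maximization. A secondary point to handle is the boundary case in which a node admits no non-degenerate split at all; there I would simply read the statement as asserting that a degenerate split is never preferred over an available non-degenerate one.
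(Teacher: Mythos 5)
Your proposal is correct, and it shares the paper's essential first step --- sending the Boltzmann operator to the minimum over eras as $\alpha \to -\infty$ via the min-analog of Proposition \ref{prop:boltz_limit} --- but the way you close the argument is genuinely different from the paper's. The paper works directly at the limit and leans entirely on the tree-growing convention that a node only splits when the criterion score is strictly positive: since the limiting criterion \emph{is} $\min_j \mathcal{L}_{\text{split}}^j$, any induced split must have $\min_j \mathcal{L}_{\text{split}}^j > 0$, hence every era-wise score is positive and the split is non-degenerate. You instead run a max--min separation argument over the finite candidate set $\mathcal{S}$: using $\mathcal{B}_\alpha \geq \min$ to pin every non-degenerate split above a fixed threshold $c^{*} > 0$ for \emph{all} $\alpha$, and pointwise convergence plus finiteness to push every degenerate split below $c^{*}$ past a single uniform $\alpha_0$. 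What your route buys is a rigorous statement for finite but sufficiently negative $\alpha$ (the ``approaching $-\infty$'' in the proposition is taken literally, and the interchange of limit and argmax is handled explicitly), which the paper's proof does not attempt; what the paper's route buys is brevity and a clean resolution of the boundary case you flag at the end --- when no non-degenerate split exists, the limiting criterion is $\leq 0$ for every candidate, so by the positive-gain rule no split is induced at all, rather than a degenerate one being ``least preferred.'' Folding that one observation into your final paragraph would make your version strictly stronger than the paper's.
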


\begin{proof}

The Boltzmann operator $\mathcal{B}_\alpha$ approaches the minimum operator as the value of $\alpha$ approaches infinity.

$$\mathcal{B}_\alpha \to \min\ \text{as}\ \alpha \to -\infty$$

In the context of era splitting (equation \ref{era_split_criterion}), the input to the operator are the split criterion scores from equation \ref{original_split_criterion} computed in each era independently. The minimum is the lowest score from any era. In order to induce a split, this score must be greater than zero, as per the rules of node splitting. The worst improvement will still be positive. Thus, the era splitting criterion with Boltzmann alpha parameter of $-\infty$ will only induce splits which reduce impurity in every era simultaneously.

\end{proof}

A simple but important upshot of era splitting is that it always produces \textbf{less impure} splits, when viewed from the traditional setting. If we evaluate all splits using the original splitting criterion, other splits induced by other splitting criteria will, by definition, be worse (result in more impurity in the child nodes when all the data is pooled together as in the traditional setting). This means that trees grown with other splitting criteria will fit the training data to a lesser degree than the original. Impurity will not have been reduced as much. In-sample performance will be poorer. The hope is that this will result in better generalization to OOS performance. This is exactly the concept of \textbf{regularization} in ML. The experiments go on to confirm that this is exactly what is happening.  

\begin{proposition}(Era Splitting as a Regularizer)
\label{prop:reg}

At each node, the era splitting (eq. \ref{era_split_criterion} ) and directional era splitting (eq. \ref{eq:dir-era-split}) criteria will always choose splits with a score \textbf{less than or equal to} (worse than) the original criterion (eq. \ref{original_split_criterion}), when measured with the original criterion.

\end{proposition}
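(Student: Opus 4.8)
The plan is to recognize that this proposition is, at heart, a statement about optimizing one fixed objective — the original criterion $\mathcal{L}_{\text{split}}$ — over a ground set of candidate splits, and that no selection rule other than the original's own argmax can beat that argmax when the result is scored by the original criterion. First I would fix a node and let $\mathcal{S}$ denote the set of admissible splits there, writing $s^*_{\text{orig}} = \argmax_{s \in \mathcal{S}} \mathcal{L}_{\text{split}}(s)$. By the very definition of the best split (Definition \ref{def:best_split}), this gives the maximality property $\mathcal{L}_{\text{split}}(s^*_{\text{orig}}) = \max_{s \in \mathcal{S}} \mathcal{L}_{\text{split}}(s) \geq \mathcal{L}_{\text{split}}(s)$ for every $s \in \mathcal{S}$, and this single inequality is what the whole argument will lean on.

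Next I would argue that the split actually returned by era splitting, call it $s^*_{\text{era}}$, and the one returned by directional era splitting, $s^*_{\text{dir}}$, both lie in $\mathcal{S}$. The key observation is that the era-based criteria only ever consider splits that are non-degenerate in the sense of the preceding discussion — splits producing non-empty left and right children in \emph{every} era. Any such split necessarily produces non-empty children on the pooled data as well, so it belongs to the domain of $\mathcal{L}_{\text{split}}$ and is a legitimate member of $\mathcal{S}$. Having established $s^*_{\text{era}}, s^*_{\text{dir}} \in \mathcal{S}$, the desired conclusions $\mathcal{L}_{\text{split}}(s^*_{\text{era}}) \leq \mathcal{L}_{\text{split}}(s^*_{\text{orig}})$ and $\mathcal{L}_{\text{split}}(s^*_{\text{dir}}) \leq \mathcal{L}_{\text{split}}(s^*_{\text{orig}})$ follow immediately by instantiating the maximality property at $s = s^*_{\text{era}}$ and $s = s^*_{\text{dir}}$.

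The only subtlety worth spelling out — and the closest thing to an obstacle, though it is a mild one — is the treatment of ties together with the well-definedness of scoring the era-chosen split under the original criterion. I would handle ties by keeping the inequality non-strict, exactly as the statement already phrases it ("less than or equal to"), with equality occurring precisely when the era criterion happens to land on a global maximizer of $\mathcal{L}_{\text{split}}$; and I would handle well-definedness through the domain argument above. I would close with one interpretive sentence connecting this to the surrounding narrative: since the chosen split's original score cannot exceed the best attainable original score, the pooled in-sample impurity reduction is no greater, which is exactly the regularization effect claimed. No nontrivial computation is required — the entire content is identifying the correct ground set and invoking the definition of the argmax.
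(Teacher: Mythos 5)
Your proposal is correct and follows essentially the same route as the paper's own proof: both arguments reduce to the observation that the original criterion's chosen split is, by definition, the maximizer of $\mathcal{L}_{\text{split}}$ over the candidate set, so no other selection rule can exceed it under that measure. Your version merely makes explicit two points the paper leaves implicit --- that the era-chosen splits belong to the same ground set $\mathcal{S}$ and that ties force the inequality to be non-strict --- which is a welcome tightening but not a different argument.
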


\begin{proof}
    Assume the original criterion (eq. \ref{original_split_criterion}) is used to find the optimal split point at a node. This split attains the highest score by definition. It is impossible for the era splitting (or any other) criterion to induce a split with a better score than the optimal split point, when measured using the original criterion. 
\end{proof}

\section{Experimental Methods}
\label{section:methods}

Four experiments have been designed to empirically validate the newly proposed splitting criteria. The first experiment is an original design, called \textit{the shifted sine wave}, validating the era splitting models in a simple one-dimensional setting. The second and third experiments are binary classification problems from the OOD literature: the synthetic memorization data set presented in \cite{parascandolo2020learning} and the Camelyon17 (CAncer MEtastases in LYmph nOdes challeNge) data set from \cite{bandi2018detection}. These two experiments confirm the new splitting criteria are effective in known OOD settings. The forth is the target application of this research, the Numerai data set. 

The experiments follow a similar procedure for each. Random configurations are drawn from a grid of parameter values. The parameters include the number of boosting iterations, maximum number of leaves, learning rate, etc. All relevant parameters in Scikit-Learn's HistGradientBoostingRegressor \cite{scikit-learn} (the baseline model) are available. The number of random configurations (between 15 and 30) and the exact parameter ranges vary slightly from one data set to another, depending on training time and specifics of each data set. For the full list of parameters and example values, see table \ref{tab:params}. See the code for details, referenced at the end of this subsection. Each of the three experimental models are paired with each random configuration: \textit{original}, \textit{era splitting}, and \textit{directional era splitting}. In this way all three models are trained and evaluated over the same set of parameter values. In each case, the model is trained on the the training set and evaluated on the OOS test set. Evaluation statistics are recorded for each configuration. Accuracy is the key statistic for classification problems. The regressor is converted to a binary classifier by rounding the predictions to the nearest integer (0 or 1). For regression problems the mean-squared error (MSE) and Pearson correlation (Corr) are observed.

\begin{table}
    \centering
    \begin{tabular}{r|l}
        \textbf{Parameter} & \textbf{Example Range} \\
        Column Sample by Tree & 0, 0.1, 0.3, 0.5, 0.7, 0.9, 1 \\
        L2 Regularization & 0, 0.2, 0.4, 0.6, 0.8, 1 \\
        Learning Rate & 0.01, 0.05, 0.1, 0.5, 1.0 \\
        Max Number of Bins & 3, 4, 5, 7, 9 \\
        Max Depth & 2, 3, 4, 5, 7, 9, 15 \\
        Max Number of Leaves & 5, 7, 10, 16, 32 \\
        Min Child Samples & 1, 3, 5, 10, 20 \\
        Boosting Rounds & 5, 10, 20, 50, 100, 150 \\
        Split Type & Original, Era Split, Dir. Era Split \\
        Boltzmann Alpha & -2, -1, 0, 1, 2 \\
    \end{tabular}
    \caption{Caption}
    \label{tab:params}
\end{table}

\subsection{The Shifted Sine Wave Data Set}

The first experiment is a proof of concept for invariant learning in a simple regression setting. There is a 1-dimensional input feature which has an invariant aspect that is obscured by two types of randomness. One is a consistent Gaussian blur and the other is a random shift which is consistent inside each era but different across different eras of data. This part in particular simulates the distributional shift in the data generation process from one era to another.

The number of eras (environments) and the number of data points per era are pre-defined. The 1-dimensional input data is a random number on the interval $[0, 2 \pi ]$. The target variable is a function of the the input. The invariant part of the target is a sine wave plus standard Gaussian noise. The environmental distribution shift is realized through a vertical shift of the target, where the magnitude of the shift is random, but consistent throughout each era. In this way, each era of data is a blurry sine wave that is shifted by a random amount. Pooling many eras of data together into one data set forms a cloud of data where the invariant mechanism, the sine wave, fades away and is not visually apparent. The experimental data set consists of 8 eras of data, each era having 64 data points. 

\begin{figure}
    \centering
    \includegraphics[width=\linewidth]{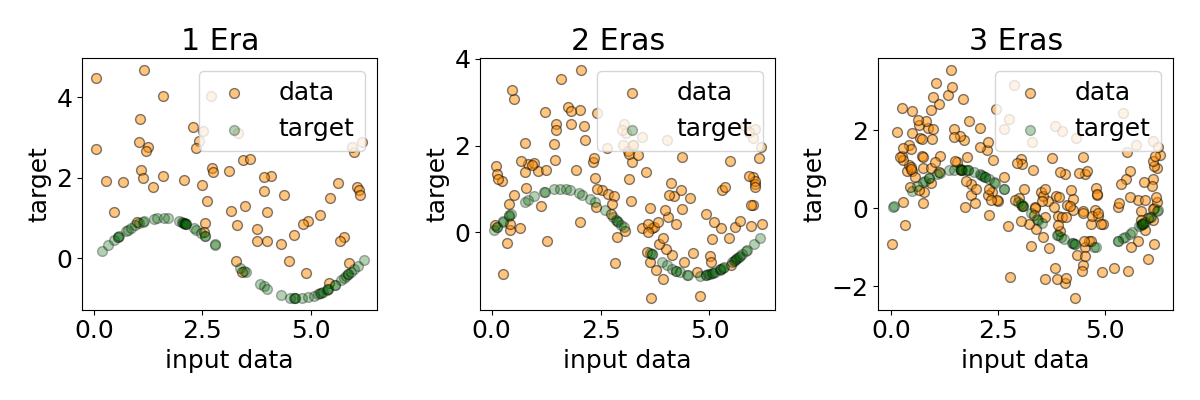}
    \caption[The Shifted Sine Wave Dataset]{A visual description of the data generation process for the shifted sine wave data set. Each era of training data starts with a sine wave (green), adds a random vertical shift and a random blur (Gaussian noise).}
    \label{fig:eras123}
\end{figure}

Evaluation metrics are computed against the test data, which is randomly generated by an additional era. 

\subsection{ Synthetic Memorization Data Set }
\label{sec:synthetic-desc}

The synthetic memorization data set was developed in  \cite{parascandolo2020learning} specifically to investigate an ML model's ability to learn invariant predictors in high-dimensional data in the presence of confounding spurious signals. It is a supervised binary classification task with target labels taking the values $\{0,1\}$. The data set is composed such that the first two input dimensions encode the invariant signal, which is an interlaced spiral shape, where one of the arms belongs to one class, and the other arm the other class. The remaining input dimensions are used to encode simple \textit{shortcuts}, or spurious easy-to-learn signals. These are simply two clusters of data points in high dimensions, where one cluster belongs to one class and the other the other class. The clusters are linearly separable, being easy to learn. The spurious aspect is realized through a mechanism where each era (environment) of data has a shift in the location of the two clusters. The shift happens in such a way that, when all the environments of the training data set are pooled together, the shortcut is still present. Refer to figure \ref{fig:synthetic-mem-dataset} or figure 5 from \cite{parascandolo2020learning} for a visualization of the data in four dimensions. When environments 1 \& 2 are pooled together, there is still an easily identifiable linear decision boundary present in the spurious signal feature dimensions.

\begin{figure}
    \centering
    \includegraphics[width=0.85\linewidth]{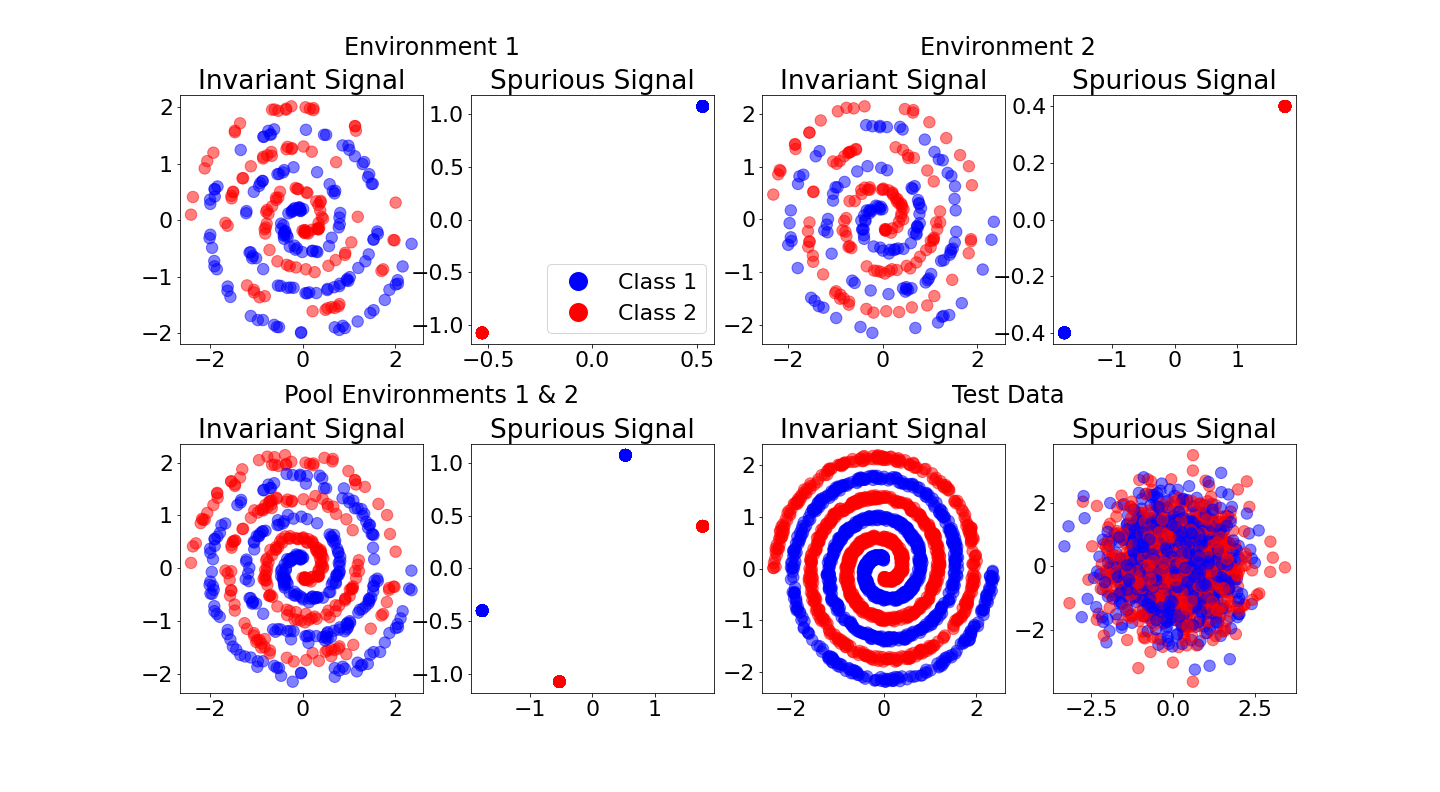}
    \caption[The Synthetic Memorization Data Set \cite{parascandolo2020learning}]{The synthetic memorization data set from \cite{parascandolo2020learning}. Each environment contains a random sample of the invariant spiral signal along the first two dimensions. The third and forth dimension contain a simple linearly-separable \textit{spurious} signal that changes with each environment. When pooling environments together, the pooled spurious signals still create a simple linear decision boundary. During OOS testing, the spurious signal is replaced by random noise. }
    \label{fig:synthetic-mem-dataset}
\end{figure}

During test time the spurious signal is replaced by random noise. Any model that bases its predictions solely on the spurious signal will be worthless when evaluated on the test data. If a model has actually learned the invariant, albeit harder-to-learn signal, it will still perform well on the test set. The authors of \cite{parascandolo2020learning} developed a technique called the AND-mask, which is an alteration to the naive gradient descent algorithm, incorporating the era-wise information. They were able to show that a naive MLP model simply learns the spurious signals in the data, and performs poorly on the test data. Meanwhile their AND-mask model was able to learn in such a way where it still performs well (98\% accuracy) on the test data.

The training data contains 12,288 data points, each having 18 dimensions, coming from 16 eras. The test set is made of 2,000 data points. In order to use a regression model as a classifier, the model is trained on the classification labels $\{0,1\}$. Predictions are then rounded to the nearest integers, which is either a $0$ or a $1$. The accuracy of the predictions can then be computed in the usual way according to 2-class classification problems.

\subsection{Camelyon17 Data Set}

The Camelyon17 data set \cite{bandi2018detection} is an example of a real-world dataset exhibiting distributional shifts. This is one of the \textit{Wilds} data sets that is distributed and supported by a team at Stanford \cite{wilds2021}. This data set aims at the automation of breast cancer detection in medical settings. The input data is a $96 \times 96$ pixel histopathological image and the target label is a binary indicator that is positive if the central $32 \times 32$ pixel region contains any tumor tissue. There are variations in data generation and processing technique from one hospital to another. The domain generalization problem is to learn to detect tumors from images from a number of hospitals, and have this detection ability transfer to new hospitals not in the training set. 

Accompanying each input data point, there is an integer identifier corresponding to the hospital. This identifier is used as the training era (environment). The raw size of the image has $96 \times 96 \times 3 = 27,648$ features, which was taxing the memory capabilities of the research computer. To reduce the memory overhead the length and width of the inputs are reduced by a factor of 1/3 using the Wilds API, reducing the number of features to a more manageable $3,072$. There are $302,436$ data points in the training set coming from $3$ hospitals. The validation set contains $34,904$ data points from one hospital and the test set contains $85,054$ data points from one hospital.  The Wilds data set provides established train, validation and test sets. Each model configuration is trained on the training set, and evaluated on the test set.

\subsection{Numerai Data Set}

Numerai is a San Francisco-based hedge fund which operates via a daily data science tournament in which the company distributes free, obfuscated data to participants, and those participants provide predictions back to Numerai. The predictions represent \textit{expected alpha}, which is a directional up/down, bull/bear style prediction. On any day, a tournament participant will need to produce predictions for 5,000 to 10,000 unique stocks. The predictions are then scored against the true realizations that transpire in the markets, which then become the basis for future target labels.

Numerai distributes a large data set designed for supervised learning. Each row of data represents one stock at one time (era). The rows consist of many input feature columns, and also several target (output) columns. This research uses the main scoring target, called \textit{Cyrus}. The Cyrus target values correspond to the optimal expected result over the subsequent 20 trading days, which is roughly equivalent to the stock's future returns in the market subject to company-level risk requirements. 

The raw time periods (eras) are numbered sequentially, starting at 1, and occur at weekly intervals. Each era represents one week of time. Currently there are over 1,060 eras of historical input and target pairs in the data set, and new eras are added each week, as the data becomes available. Each era contains more than 5,000 rows of data, on average, so the entire data set consists of over 5 million rows. The time horizon of the targets spans 4 weeks. Currently the full data set provides over 1,600 input features. In order to reduce training resources, a compact set of 244 features made popular by the tournament participant ShatteredX is used as our base feature set. It was shown with a baseline model that these features lead to out performance of the model which uses all the features. 

For the experimental data, the entire data set is partitioned into 5 folds over time. The first 4 folds are used for training, the the fold identifier is used as the era identifier, in place of the true era. The last fold is used for evaluation. This process is important and effectively reduces the number of eras in the data from over 1,000 to 5. The terminology can get confusing, so these larger eras are called \textit{training eras}. Each training eras spans about 4 years of time. This is a parameter choice that can reduce the complexity of the model. Reducing the number of training eras reduces the number of iterations in the algorithm, but also can lead to different performance capabilities. The current experiment did not explore fully the whole range of training era sizes, due to time constraints.

A notable grid search parameter is the number of estimators (boosting rounds), which varies between 50 and 1,000, to conserve training time. One "benchmark" model well-known to the community is the LightGBM model with 2,000 trees, maximum depth of 5, 32 leaves, learning rate of 0.01 and column-sample-by-tree parameter of 0.1. This model is trained separately in order to make a comparison of our results to a well-known model.

The main evaluation metric is the \textit{era-wise correlation}, which is called \textit{corr}. The era-wise correlations are the mean Pearson correlations of the predictions with the targets computed over each era of data independently. A secondary important metric is called \textit{corr Sharpe}, which is the mean era-wise correlation divided by the standard deviation of the era-wise scores.

\section{Results}
\label{results}

\begin{table}
    \centering
    \begin{tabular}{@{} *5l @{}}    \toprule
        \emph{Experiment} & \emph{Original} & \emph{Era Split} & \emph{Dir. Era Split} \\\midrule
        Sine Wave (MSE) & 1.79  & \textbf{1.71} & 1.75  \\ 
        Synth. Mem. (Acc) & 52 & 88 & \textbf{96}\\ 
        Camelyon17 (Acc) & 60 & 65 & \textbf{78}\\
        Numerai (Corr) & 0.0236 & 0.0203 & \textbf{0.0265}\\\bottomrule
         \hline
    \end{tabular}
    \caption{Best Test Scores by Model.}
    \label{tab:scores}
\end{table}

The results for all four experiments are summarized in table \ref{tab:scores} and figure \ref{fig:scores-generalization}. Table \ref{tab:scores} tabulates the best evaluation score over all the random configurations for each model split type. In each experiment the best model is either an era splitting model or a directional era splitting model. Figure \ref{fig:scores-generalization} displays box plots of the evaluation metrics for both training and test sets over all the model configurations, separated by split type. A consistent trend emerges. The original split models are characterized by the best in-sample performance and the worst OOS performance, ie. the largest generalization gap. The era split and directional era split models have decreased in-sample and increased OOS performance, a smaller generalization gap, consistently in every experiment. This supports the claim of proposition \ref{prop:reg}, that these new splitting criteria act as effective regularizers.

\begin{figure}
    \centering
    \includegraphics[width=.45\textwidth]{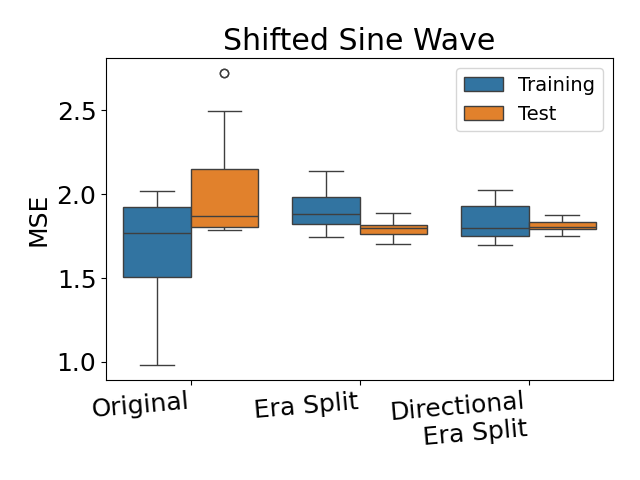}
    \includegraphics[width=.45\textwidth]{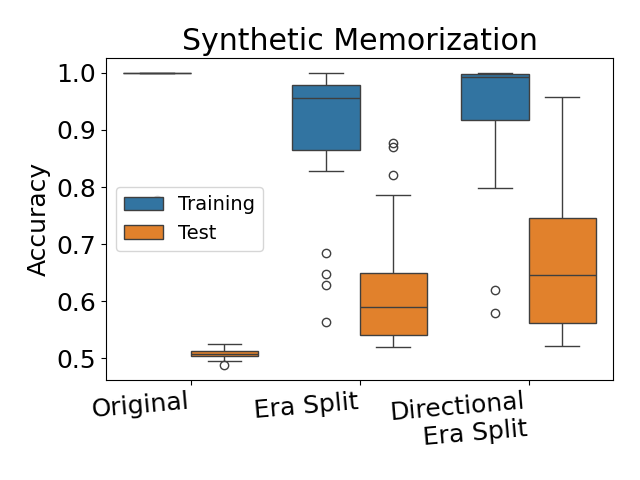}
    \includegraphics[width=.45\textwidth]{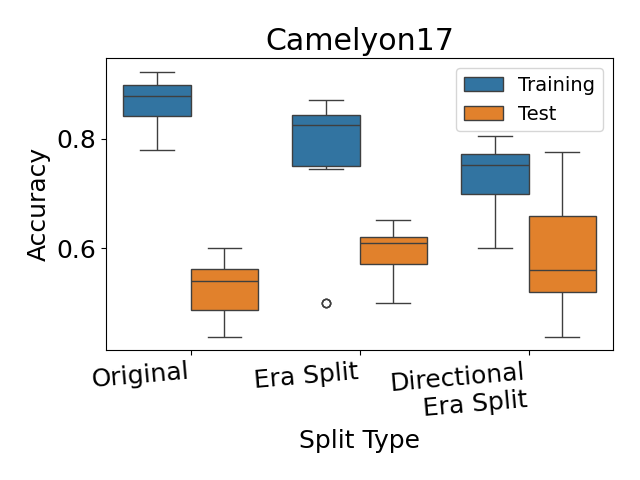}
    \includegraphics[width=.45\textwidth]{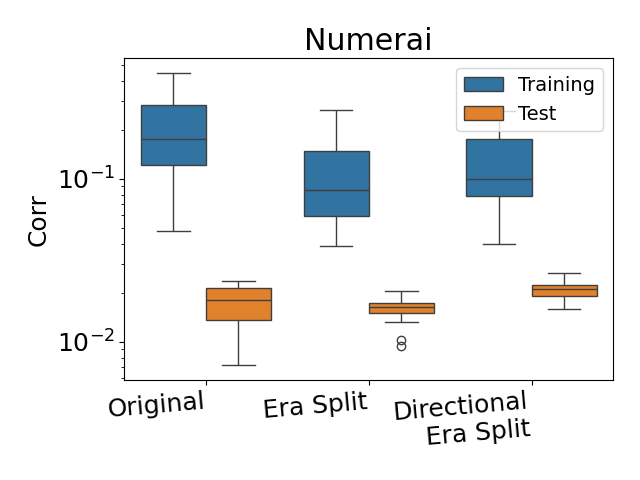}
    \caption[Box Plots of Evaluation Metrics for Era Splitting Experiments]{Box plots of the evaluation metrics for each experiment, for training and test sets.}
    \label{fig:scores-generalization}
\end{figure}

Figures \ref{figure:sine-wave-experiments} and \ref{fig:synthetic-mem-figure} visualize the decision surfaces as a function of input data values for the shifted sine wave and synthetic memorization data sets. The sine wave experiment is one dimensional and easy to plot. The original split criterion results in a much noisier decision boundary than both era splitting and directional era splitting for this configuration. The synthetic memorization data set is high-dimensional, with the invariant swirl signal encoded on the first two dimensions. Plotting the predicted values as a function of these first two dimensions reveals whether or not the model has learned to use this signal. The original split model has been unable to learn the swirl signal, as well the predicted values are very extreme, meaning the model is very confident in those predictions. The high in-sample and low OOS accuracy scores indicate that these models have only learned the spurious signals which disappear in the OOS test set. On the other hand, both the experimental models have clearly learned the swirl signal, with higher predicted values along one arm and lower values along the other.

\begin{figure}
    \centering
    \begin{subfigure}[b]{0.4\textwidth}
        \includegraphics[width=\textwidth]{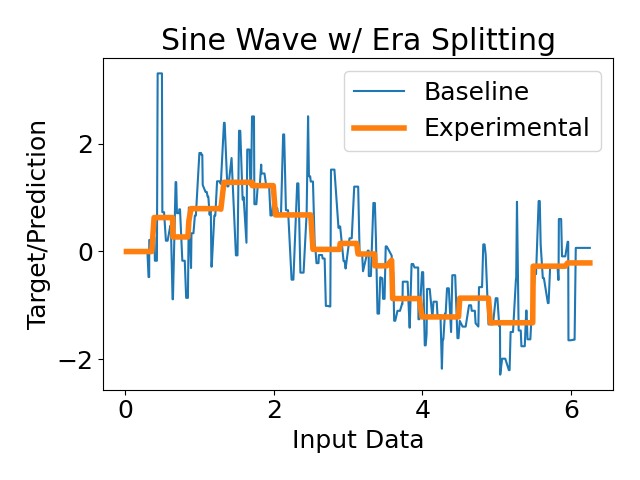}
        \caption{Era Splitting}
    \end{subfigure}
    \begin{subfigure}[b]{0.4\textwidth}
        \includegraphics[width=\textwidth]{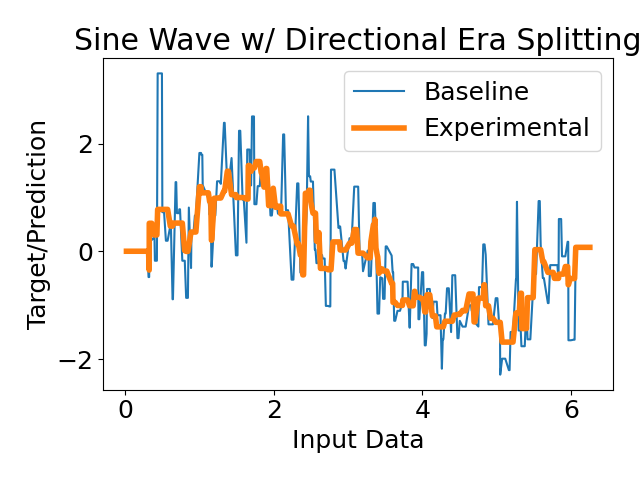}
        \caption{Directional Era Splitting}
    \end{subfigure}
    \caption[Visualization the Decision Boundary for the Sine Wave Data Experiment]{Visualization of the decision boundary of the baseline model vs. the experimental model. On the left era splitting, on the right, directional era splitting. Both baseline and experimental model are trained with identical base parameters: boosting rounds = 100, max depth = 10, learning rate=1, and all other parameters default.}
    \label{figure:sine-wave-experiments}
\end{figure}

Both new splitting criteria improve OOS results on the Camelyon17 data set, indicating that these new criteria can be useful in real-world health related applications outside of finance. 

The Numerai experiment shows that directional era splitting tends to consistently surpass the original splitting criterion, while the era splitting criterion did no surpass that of the original. In the direct comparison with the 2,000 tree benchmark model, the directional era splitting attained higher OOS corr with the target, 0.0271 compared too 0.0261 of the original. Results can be more nuanced than the bottom-line evaluation metric can show. In such a large parameter space, 15 to 20 configurations does not really explore a large portion of the space. To give more insight into the problem, there are a couple extra notebooks with experiments spanning more model parameters available in the online repository. In an additional experiment where the number of training eras is also varied from 3 to 8, several configurations of directional era splitting attained high correlation Sharpes, topping out at 1.26. That experiment can take 5 days or more to complete for 25 configurations. 

\section{Discussion}
\label{section:discussion}

The new splitting criteria presented in this research improve OOS performance and reduce in-sample performance of GBDT models, reducing the generalization gap in 4 concrete experiments. This supports the claimed roll as an effective regularizer. These experiments contain known and unknown distributional shifts within the training data and also between training and tests sets. Trees grown according to our new splitting criteria are more likely to generalize in these settings than trees grown according to the original splitting criterion.

Both of the new criteria are especially effective in the synthetic memorization problem and the Camelyon17 data set. The synthetic memorization contains a non-linear spiral signal as the invariant signal, while the true invariant signals present in the Camelyon17 data set are unknown. The positive results here showcase that there is a lot of information to lose if one does not take into account the domain (environmental) information.

The Numerai data set proved to be the most difficult to improve over the baseline, but the directional era splitting consistently did. It should be noted here that the era splitting criterion also tended to out-perform in preliminary experiments where a small grid was used, with a lower number of total trees. As the number of trees increased, the era splitting criterion itself appeared to become less effective. However there is still a lot left to explore here. While the era splitting correlation score ends up lower than the benchmark, other statistics can be much better, like correlation Sharpe and maximum draw down. The current Numerai experiment took 48 hours to train and evaluate 15 configurations. There are still many model configurations to experiment with. The number of training eras can also be varied and adapted, and this space is vast and relatively unexplored. An exploratory experiment showed some promise in this direction. In future experiments, the number of models will be expanded to several hundred, or even 1,000, which will take weeks or months to complete training. While out performance of individual models appears modest, the added variety from the the new split criteria can create the opportunity for more robust \textit{ensembles}, which are averaged groups of models.

This research has taken a large step in the development of invariant learning for GBDTs, but is still in its infancy. Directional era splitting appears to work the best in real-world data sets, even though it strays farther from traditional ideas in decision trees than the era splitting criterion. Instead of variance-based impurity measures, this uses a \textit{directional}-based measure, which is completely new to the field. This provides fresh ground for further theoretical interpretation and research.

\begin{figure}
    \centering
    \includegraphics[width=\textwidth]{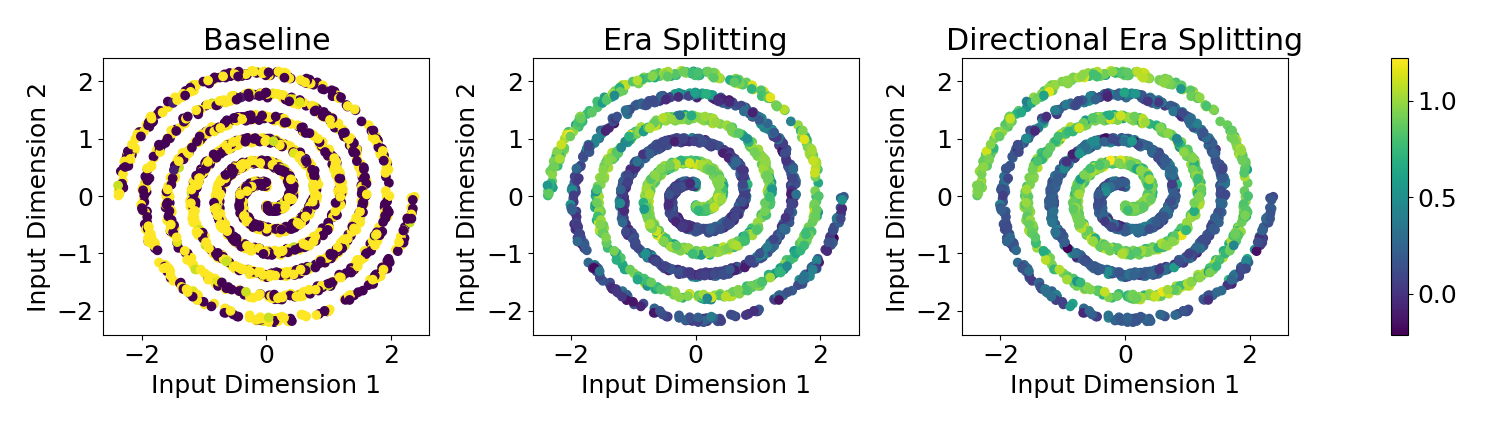}
    \caption[Visualization the Decision Boundary for the Synthetic Memorization Data Experiment]{ Comparison of the decision boundaries between the experimental models on the first two dimensions of the synthetic memorization data set. The color of the data point shows the predicted value based on that input. The original model was unable to learn the swirl signal while era splitting and directional era splitting models were able to. }
    \label{fig:synthetic-mem-figure}
\end{figure}

\subsection*{Time Complexity}

One of the biggest obstacles to the wide-spread adoption of this model is the added time complexity that is added by the era splitting routine. Indeed, one of the main issues with GBDTs in general is that, in order to grow a tree, each node must compute the split score (equation \ref{original_split_criterion}) for every potential split in the entire data set. The calculation must be made for every distinct value for every feature in the data. The histogram method \cite{Chen_2016} in modern software libraries has helped by reducing the number of unique values in the features down to a set parameter value. Setting this parameter to a low value, like 5, can greatly reduce training time. The \textit{colsample bytree} method also reduces the total number of computations needed by randomly sampling a subset of the features to use to build each tree, similar to the way bagging can reduce the number of rows.

In Era Splitting, the split gain must be computed not once but $M$ times per split, where $M$ is the number of training eras in our training data. For the Numerai data, $M$ can potentially take values up to 1,000. Run times for this implementation would take 1,000 times longer than the baseline model. This becomes prohibitive, as training times for larger models can easily take several days to complete training. In the experiment, the number of raw eras was reduced down to 5 training eras, which has a small time complexity and good performance. In other domains, such as the health domain characterizing the Camelyon17 data set, environments (such as hospitals) could be grouped together in logical ways, perhaps by geographic location. This, however, wasn't an issue for Camelyon17 data, which only came from a handful of hospitals. More progress could be made to understand the trade offs, limitations, and performance affects pertaining to the number of training eras in the data.

\section{Paper Code}

All the code for reproducing the experiments presented in this paper is available in the following two links. The first contains the forked version of Scikit-Learn\cite{scikit-learn} which implements the era splitting models, and the second contains all the notebooks to recreate the figures and experiments from this research.

\begin{itemize}
    \item \textbf{\url{github.com/jefferythewind/scikit-learn-erasplit}}
    \item \textbf{\url{github.com/jefferythewind/era-splitting-notebook-examples}}
\end{itemize}

\section*{Acknowledgments}

I would like to express my gratitude to Richard Craib, Michael Oliver, and the entire Numerai research team for their invaluable assistance and support throughout this project. This project was funded in part through an internship with Numerai from the summer of 2022 to 2023.



\bibliography{bib}

\end{document}